\theoremstyle{plain}
\newtheorem{theorem}{Theorem}[section]
\newtheorem{lemma}[theorem]{Lemma}
\theoremstyle{definition}
\theoremstyle{remark}
\title{Adaptive Margin RLHF via Preference over Preferences}
\author{
  Yaswanth Chittepu$^{1*}$, Prasann Singhal$^{2}$, Greg Durrett$^{2}$, Scott Niekum$^{1 \dag}$ \\
  $^{1}$University of Massachusetts Amherst, 
  $^{2}$University of Texas at Austin \\
  \thanks{Correspondence to: \texttt{ychittepu@umass.edu}}
}
\begin{document}

\maketitle

\begin{abstract}
  Margin-based optimization is fundamental to improving generalization and robustness in classification tasks. In the context of reward model learning from preferences within Reinforcement Learning from Human Feedback (RLHF), existing methods typically rely on no margins, fixed margins, or margins that are simplistic functions of preference ratings. However, such formulations often fail to account for the varying strengths of different preferences or they rely on noisy margin information derived from preference ratings. Furthermore, many existing methods that use adaptive margins assume access to accurate preference scores, which can be difficult for humans to provide reliably. We propose leveraging preferences over preferences, that is, annotations indicating which of two preferences reflects a stronger distinction, to infer adaptive margins on a per-datapoint basis. Such preference-over-preference annotations are general and can be incorporated into both standard RLHF reward modeling objectives and direct alignment losses. As a concrete instantiation, we introduce DPO-PoP, an extension to Direct Preference Optimization (DPO) that incorporates adaptive margins from preference-over-preference supervision, enabling improved discriminative and generative performance.
  Additionally, we show a tradeoff between discriminative and generative performance and propose two sampling strategies for gathering preference-over-preference labels to navigate it.
\end{abstract}

\section{Introduction}
\label{sec: introduction}


Margin-based approaches have been pivotal in the design and analysis of classification algorithms. In classical machine learning, the margin, defined as the distance between a decision boundary and data points, acts as a proxy for confidence and plays a critical role in improving generalization. For example, Support Vector Machines (SVMs) maximize the minimum margin, which has been shown to enhance robustness and reduce overfitting \citep{cortes1995support}. Ensemble methods like AdaBoost \citep{freund1996experiments} also leverage margin-based generalization, as boosting algorithms implicitly seek to increase the margin distribution across training samples \citep{schapire1998boosting}.

Although fixed-margin strategies have proven effective, they assume fixed and equal margin for all training data points. This has motivated the development of adaptive margin approaches, where the margin varies across examples based on criteria such as sample difficulty, uncertainty, or class imbalance. Adaptive Margin SVMs \citep{westonASV} use different margin values for different training data points and provide bounds on the generalization error, justifying its robustness against outliers. Furthermore, methods such as CurricularFace \citep{huang2020curricularface}, AdaCos \citep{zhang2019adacosadaptivelyscalingcosine}, and adaptive triplet losses \citep{ha2021deep} have shown that adapting the margin dynamically during training leads to more stable optimization and better generalization.

In Reinforcement Learning from Human Feedback (RLHF), pairwise preference data from humans is used to learn a reward function or policy. The Bradley-Terry (BT) model \citep{bradley1952rank} is widely used to model pairwise preference data, where the probability of preferring one output over another is determined by the difference in their reward scores.
This preference model is commonly used in the alignment of large language models (LLMs) \citep{ouyang2022training, touvron2023llama}, in which a reward function is learned to rank outputs based on human preferences, and subsequently used to optimize the policy.


Current reward modeling approaches generally fall into two categories. Some methods treat all preferences equally by applying no margin at all \citep{ouyang2022training}. Others incorporate unequal treatment by introducing adaptive margins, which are typically derived in one of two ways: either from scalar scores assigned to preferences by human annotators or language models \citep{touvron2023llama, wang2025helpsteer2preferencecomplementingratingspreferences}, or from the outputs of learned reward models \citep{wang2024secretsrlhflargelanguage, qin2024understandinginfluencerewardmargin, amini2024direct, wang2024rewarddifferenceoptimizationsample}.
Using constant or no margin information fails to account for the varying strength of different preferences. Obtaining preference strength information from preference scores allows us to use adaptive margin information, but requires us to collect scalar feedback from LLMs or humans.

\begin{wrapfigure}{r}{0.6\columnwidth}
    \centering
    \includegraphics[width=0.58\columnwidth]{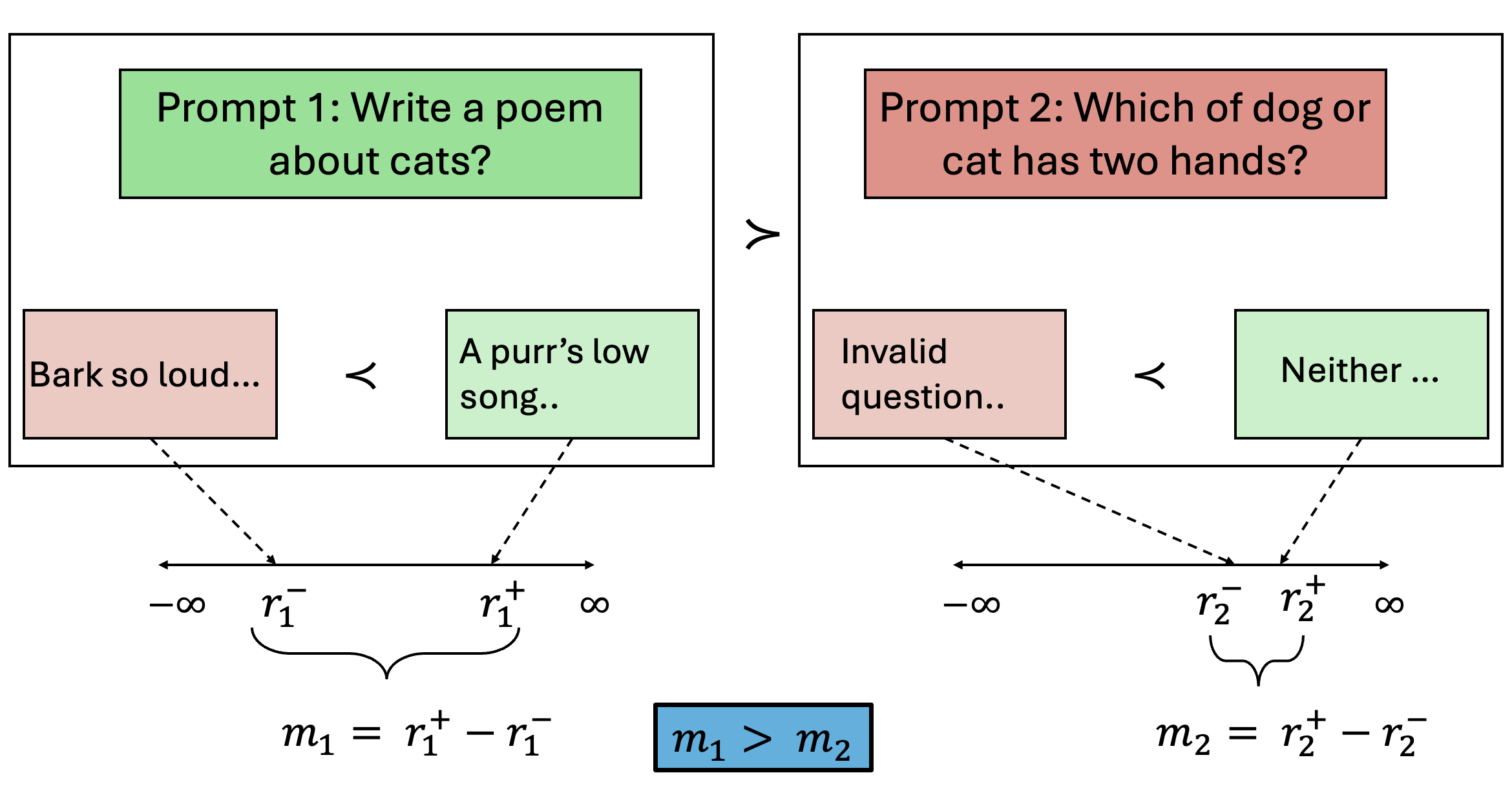}
    \caption{\small A pictorial illustration of the PoP framework. A preference is stronger than another when the reward difference between its preferred and dispreferred responses is larger. The reward difference of the weaker preference serves as the margin for the stronger.}
    \label{fig:pop-illustrated}
\end{wrapfigure}


Specifying preference strength typically requires assigning a numerical score, which is poorly calibrated in practice. Labeling schemes such as Likert ratings, where annotators rate responses individually rather than comparatively, may not be consistently calibrated---even annotators who agree on which response is better may assign inconsistent scores \citep{wadhwa2024usingnaturallanguageexplanations}. Comparative annotation approaches such as Best-to-Worst Scaling (BWS) sidestep this by grounding judgments in relative comparisons, and have been shown to produce significantly more reliable annotations as a result \citep{kiritchenko2017best, burton2019best}.

We extend this logic one level further. Rather than asking annotators to place a preference on an absolute scale, we ask them to compare two preferences and indicate which reflects the stronger preference. This preference-over-preference (PoP) formulation is cognitively natural: it reduces to a single binary judgment about relative magnitudes. Critically, the PoP formulation is grounded in MACBETH \citep{banae1994macbeth}, a decision-analysis method that elicits cardinal value scales from qualitative pairwise comparisons of differences in attractiveness. PoP brings this validated principle into the preference learning setting, with the added benefit that these binary comparisons are informationally richer than discrete ratings: we can infer continuous real-valued margins rather than being confined to the coarse, poorly-anchored discrete options of rating scales.

In this work, we propose a framework for adaptive margin RLHF via Preference-over-Preference (PoP) supervision, illustrated in Figure~\ref{fig:pop-illustrated}. Rather than deriving margins from noisy and improperly calibrated scalar annotations, PoP infers them from comparative judgments between preferences, which are more reliably calibrated across annotators. The PoP supervision framework is general: it can be incorporated into the reward modeling objective in standard RLHF pipelines, or into direct alignment losses directly. As a concrete instantiation, we propose DPO-PoP, which integrates PoP supervision into the DPO framework \citep{rafailov2024directpreferenceoptimizationlanguage}, enabling margin-aware alignment using only supervised learning.
We demonstrate that collecting PoP supervision is a simple and effective way to improve LLM performance, with DPO-PoP variants improving over all baselines in both discriminative and generative metrics. Moreover, we highlight a tradeoff between discriminative performance, as measured by test classification accuracy, and generative performance, as measured by win rate, where improving classification accuracy on weaker preferences at the expense of stronger ones can lead to a decline in generative quality. To navigate this tradeoff, we propose two sampling strategies for generating preference-over-preference labels: iterative sampling, which favors discriminative performance, and random sampling, which favors generative performance.



\section{Background}
\label{sec: background}
\subsection{Reward Modeling} In the reward modeling stage of RLHF, a reward model is trained to assign scalar scores to prompt-response pairs, indicating how well a response aligns with human preferences. This process relies on a preference dataset $\mathcal{D}_{\text{pref}} = {(x_i, y^+_i, y^-_i)}_{i=1}^{N}$, where $x_i$ is a prompt, $y^+_i$ is the preferred response, and $y^-_i$ is the dispreferred response. The Bradley-Terry (BT) model \citep{bradley1952rank} is commonly used to model preference likelihoods.


\begin{equation}
\begin{aligned}
P(y^{+} \succ y^{-})
&= \sigma\!\left(r(x,y^{+})-r(x,y^{-})\right)
\end{aligned}
\end{equation}

Here, $r$ denotes the 
reward assigned to a prompt-response pair, and $\sigma$ denotes the sigmoid function. We parameterize the reward function as $r_\phi$, and use it to approximate the ground-truth reward function by maximizing the likelihood of the observed preference data under the Bradley-Terry model (Equation~\ref{eq:mle-bt-rm}). For more details on the RLHF pipeline, refer to Appendix $\ref{ap:rlhf}$

\begin{equation}
\label{eq:mle-bt-rm}
    \min_{\phi} -\mathbb{E}_{(x,y^{+},y^{-}) \sim \mathcal{D}_\text{pref}}[\log \sigma(r_{\phi}(x,y^{+})-r_{\phi}(x,y^{-}))]
\end{equation}

\subsection{Direct Preference Optimization}

Direct Preference Optimization (DPO) \citep{rafailov2024directpreferenceoptimizationlanguage} belongs to a class of algorithms, called Direct Alignment Algorithms (DAAs) \citep{rafailov2024scalinglawsrewardmodel}, which aim to directly align a policy from preference data via supervised learning, without having to learn a reward model or use reinforcement learning.
DPO utilizes the closed form solution of the optimal KL regularized reward policy \citep{peters2007reinforcement,peng2019advantageweightedregressionsimplescalable} and expresses the rewards in the Bradley-Terry preference model \citep{bradley1952rank}, directly in terms of the optimal policy. This allows us to learn a parameterized optimal policy directly from the preference data, using Equation $\ref{eq:dpo}$


\begin{equation}
\label{eq:dpo}
\begin{aligned}
\mathcal{L}_{\text{DPO}}(\pi_{\theta}; \pi_{\text{ref}})
&= \mathbb{E}_{(x,y^{+},y^{-}) \sim \mathcal{D}_{\text{pref}}}
\Bigl[
-\log \sigma\Bigl(
\beta \log \frac{\pi_{\theta}(y^{+}\vert x)}{\pi_{\text{ref}}(y^{+}\vert x)} 
- \beta \log \frac{\pi_{\theta}(y^{-}\vert x)}{\pi_{\text{ref}}(y^{-}\vert x)}
\Bigr)
\Bigr]
\end{aligned}
\end{equation}

The implicit reward assigned by the DPO model to a response $y$ given a prompt $x$ is $\beta \log \frac{\pi_{\theta}(y\vert x)}{\pi_\text{ref}(y\vert x)}$.

\subsection{Margins in Reward Modeling}

Margins can be incorporated into the reward modeling phase of the RLHF pipeline to enforce not only that the reward model ranks the preferred response higher than the dispreferred one, but also that it assigns a sufficiently large difference in reward scores—either through fixed or adaptive margins. The margin-based reward modeling loss, for a margin $m(x,y^{+},y^{-})$, can be expressed as:



\begin{equation}
\begin{aligned}
\min_{\phi} \quad -\mathbb{E}_{(x,y^{+},y^{-}) \sim \mathcal{D}_{\text{pref}}}
\Bigl[
&\log \sigma\Bigl(
r_{\phi}(x,y^{+}) - r_{\phi}(x,y^{-})- m(x,y^{+},y^{-})
\Bigr)
\Bigr]
\end{aligned}
\end{equation}


\section{Method: Adaptive Margin DPO with Preferences over Preferences}

\label{sec: method}
To obtain adaptive margin information, in which each preference datapoint is assigned a different margin, and stronger preferences are associated with larger margins than weaker ones, we propose preferences over preferences (PoP) supervision. Given two standard preference comparisons, such as $A \succ B$ and $C \succ D$, we collect a label indicating which of the two preferences is stronger, from a labeler.
For example, if the supervision indicates that $(A \succ B) \succ (C \succ D)$, this means that the discrepancy between $A$ and $B$ is greater than that between $C$ and $D$ under the ground-truth reward function $r$. Formally, this implies:

\[
r(A) - r(B) > r(C) - r(D)
\]


This insight allows us to treat the margin from the weaker preference (e.g., $r(C) - r(D)$) as a lower bound on the margin for the stronger preference (e.g., $A \succ B$). Rather than regressing to a specific value, we enforce that the margin for the stronger preference must be at least as large as that of the weaker one. 

We assume access to a dataset of preference over preference examples:

\[
\mathcal{D}_{\text{PoP}} = \left\{ \left( (x_{s_i}, y_{s_i}^{+}, y_{s_i}^{-}),\ (x_{w_i}, y_{w_i}^{+}, y_{w_i}^{-}) \right) \right\}_{i=1}^N
\]

Here, $(x_{s_i}, y_{s_i}^{+}, y_{s_i}^{-})$ represents the stronger preference in the pair, where $x_{s_i}$ is the prompt, $y_{s_i}^{+}$ is the preferred response, and $y_{s_i}^{-}$ is the dispreferred response. Similarly, $(x_{w_i}, y_{w_i}^{+}, y_{w_i}^{-})$ denotes the weaker preference, where $x_{w_i}$ is the prompt, $y_{w_i}^{+}$ is the preferred response, and $y_{w_i}^{-}$ is the dispreferred response. Note that, unlike in standard reward modeling datasets, the prompts $x_{s_i}$ and $x_{w_i}$ can differ within a single PoP example, as PoP supervision compares the strength of entire preference instances, not individual responses.



We can express the adaptive margin reward modeling objective on a dataset of preferences over preferences as follows

\begin{align}
\label{eq:rm-pop}
\min_{\phi} \ \mathbb{E}_{\mathcal{D}_{\text{PoP}}
} \Big[
- \log \sigma \big(& r_{\phi}(x_{s}, y_{s}^{+}) - r_{\phi}(x_{s}, y_{s}^{-}) - \text{sg} \left[ r_{\phi}(x_{w}, y_{w}^{+}) - r_{\phi}(x_{w}, y_{w}^{-}) \right] \big)
\Big]
\end{align}


Here, \( \text{sg}[\cdot] \) denotes the stop-gradient operator. Although the adaptive margin is computed using the reward model \( r_\phi \), we treat the margin derived from the weaker preference as a \textit{fixed reference} during optimization. Applying the stop-gradient operator ensures that gradients do not propagate through this margin term, thereby preventing it from influencing updates to the reward model parameters \( \phi \). Without the stop-gradient operator, the objective would incentivize parameters that invert the weaker preference to minimize the loss.

We use the closed-form solution for the optimal policy of a KL regularized reward problem to express the rewards directly in terms of the optimal policy, as in DPO \citep{rafailov2024directpreferenceoptimizationlanguage}. Parameterizing the optimal policy by $\theta$, we end up with the DPO Preference-over-Preference loss. Denoting the implicit margin of a preference pair as $m_{\theta}(x,y^+,y-) = \beta \left(
            \log \frac{\pi_{\theta}(y^{+} \mid x)}{\pi_\text{ref}(y^{+} \mid x)}
            - \log \frac{\pi_{\theta}(y^{-} \mid x)}{\pi_\text{ref}(y^{-} \mid x)}
        \right)$, the DPO Preference-over-Preference loss is:



{
\setlength{\abovedisplayskip}{2pt}%
 \setlength{\belowdisplayskip}{2pt}%
 \setlength{\abovedisplayshortskip}{2pt}%
 \setlength{\belowdisplayshortskip}{2pt}%
\begin{align}
\label{eq:dpo-pop}
\min_{\theta} \mathbb{E}_{\mathcal{D}_\text{PoP}} \Big[
    -\log \sigma \Big(&
        m_{\theta}(x_s, y_{s}^{+},y_{s}^{-}) - \text{sg} \left[
            m_{\theta}(x_w, y_{w}^{+},y_{w}^{-})
        \right]
    \Big)
\Big]
\end{align}}


The DPO Preference-over-Preference (DPO-PoP) objective enables margin-aware alignment directly from PoP data using supervised learning. However, Equation~\ref{eq:dpo-pop} suffers from unstable gradients due to unbounded margins, resulting in a rapidly fluctuating loss that can explode during training.
To mitigate this, we clip the margin values to lie within a fixed interval \([0, M_{\text{max}}]\), where \(M_{\text{max}}\) is a user-specified constant. Margin values outside this range are clipped to the nearest endpoint, using a clipping function \(\text{clip}_{[0, M_{\text{max}}]}\), which improves optimization stability. Additionally, to further stabilize training, we compute the margins using a slowly-updated target policy \( \pi_{\hat{\theta}} \), whose parameters $\hat{\theta}$ track the policy \( \pi_{\theta} \) via Polyak averaging over the model parameters \( \theta \). This prevents the margin estimates from changing too rapidly across training steps. 
With these modifications, our final DPO-PoP objective is given by Equation~\ref{eq:dpo-pop-clip}

{\setlength{\abovedisplayskip}{2pt}%
 \setlength{\belowdisplayskip}{2pt}%
 \setlength{\abovedisplayshortskip}{2pt}%
 \setlength{\belowdisplayshortskip}{2pt}%
\begin{align}
\label{eq:dpo-pop-clip}
\min_{\theta} \; \mathbb{E}_{\mathcal{D}_\text{PoP}}\Bigg[
    -\log \sigma & \Bigg(
         m_{\theta}(x_s, y_{s}^{+},y_{s}^{-})- \text{sg}\Big[
            \text{clip}_{[0,M_{max}]} \Big(
            m_{\hat{\theta}}(x_w, y_{w}^{+},y_{w}^{-})
            \Big)
        \Big]
    \Bigg)
\Bigg]
\end{align}
}

From Equation~\ref{eq:dpo-pop-clip}, we see that DPO-PoP requires only one additional forward pass to compute margins, when compared to DPO. As a result, the additional computational overhead is minimal and the method is easy to integrate into existing DPO implementations.

\section{Results}

\label{sec: results}
We focus on the following research questions: \textbf{[Q1]} Does using DPO-PoP lead to models with improved discriminative ability? \textbf{[Q2]} Does using DPO-PoP lead to models with improved generative ability? We investigate these questions by evaluating the performance of our models on the test split of the UltraFeedback dataset \citep{cui2024ultrafeedbackboostinglanguagemodels} and external benchmarks such as RewardBench \citep{lambert2024rewardbenchevaluatingrewardmodels} and AlpacaEval-2 \citep{dubois2025lengthcontrolledalpacaevalsimpleway}. More importantly, we also investigate \textbf{[Q3]}: Do the same trends observed in Q1 and Q2 hold when PoP annotations are gathered from an LLM annotator? This is important because it sheds light on whether PoP annotation is a practically viable alternative to rating-scale annotations for improving performance.
\subsection{Synthetic Data Experiments}
\subsubsection{Generating the Preference over Preference data}


We use the UltraFeedback \citep{cui2024ultrafeedbackboostinglanguagemodels} binarized dataset\footnote{\texttt{HuggingFaceH4/ultrafeedback\_binarized}} for our evaluations. The dataset provides scalar scores for the chosen and rejected responses, aggregated from multiple LLM evaluators. We compute the ground-truth margin for each preference as the score difference between the two responses, which also enables construction of PoP comparisons.
Although a preference dataset of size 
$\vert D_\text{pref}\vert$ can yield up to $\frac{\vert D_\text{pref}\vert (\vert D_\text{pref}\vert-1)}{2}$ PoP pairs, we restrict the PoP dataset to 
$\vert D_\text{PoP}\vert=k \vert D_\text{pref}\vert$ to maintain computational tractability. Appendix~\ref{ap:pop-data-scale} analyzes performance as a function of $k$ and shows that performance deteriorates beyond a certain value of $k$, justifying the use of small values; we use $k=2$ by default. We reiterate that there is NO quadratic scaling complexity due to pairing preferences, and it suffices to use a PoP dataset of size that is a small scalar multiple of the size of the preference dataset.

We evaluate two strategies for constructing the PoP dataset: one that represents each preference from the original dataset equally, and one that represents preferences in proportion to preference strength. We do this to explore the impact of different sampling strategies used to generate the PoP dataset, on downstream discriminative and generative performance. In the \textbf{iterative sampling} approach, each preference data point is equally represented by comparing it against $k$ weaker preferences. In practice, without ground-truth margin data, we could choose a preference and provide comparison preferences, asking the user for a label. We only choose $k$ preference pairs in which our chosen preference is judged to be stronger than the comparative preference. In contrast, the \textbf{random sampling} approach constructs the PoP dataset by randomly selecting pairs of preferences and labeling them based on their margins. This results in stronger preferences appearing more frequently in the PoP dataset than weaker ones. Furthermore, the \textbf{random sampling} approach is straightforward to implement in practice, in comparison to the iterative sampling approach, as this would only involve randomly sampling pairs of preferences and asking the annotator for a label. After generating the PoP dataset, we discard the original scalar scores and do not use them at any stage of model training.

\subsubsection{Experimental Setup}

We consider two models in our experiments: Llama-3.2-3b and Llama-3.1-8b \citep{grattafiori2024llama3herdmodels}. Following the standard direct alignment pipeline, we align these models using the UltraFeedback preference dataset \citep{cui2024ultrafeedbackboostinglanguagemodels}. We begin with a pretrained model and fine-tune it on the supervised fine-tuning (SFT) partition of the UltraFeedback dataset. Next, we align the models using the preference data from the same dataset. For further experimental details, refer to Appendix \ref{ap:exp_details}. Unless otherwise specified, all confidence intervals are computed over 6 seeds.

We evaluate the following variants of Direct Preference Optimization (DPO):

\begin{enumerate}[leftmargin=*, itemsep=0pt, topsep=0pt]
    \item \textbf{Vanilla DPO}: No margin is used in the loss function.
    \item \textbf{DPO-margin-1}: A fixed margin of 1 is applied to all preferences.
    \item \textbf{DPO-margin-gt}: Ground-truth margin values from the UltraFeedback dataset are used.
    \item \textbf{DPO-margin-gt-scaled}: This corresponds to the Scaled Bradley-Terry loss from \citet{wang2025helpsteer2preferencecomplementingratingspreferences}. The loss incorporates ground-truth margin information outside the log-sigmoid function rather than inside, effectively placing greater weight on preferences with larger margins. This can be interpreted as repeatedly sampling stronger preferences. The loss is defined as:
    \begin{equation}
    \mathcal{L}_{\text{SBT}}(x,y_{+},y_{-}) = - m \log \sigma \!\left( \beta \log \frac{\pi_{\theta}(y^{+} \mid x)}{\pi_{\text{ref}}(y^{+} \mid x)} - \beta \log \frac{\pi_{\theta}(y^{-} \mid x)}{\pi_{\text{ref}}(y^{-} \mid x)} \right)
    \end{equation}

    \item \textbf{DPO-PoP-iter}: Margins are inferred from preference-over-preference (PoP) supervision (Eq.~\ref{eq:dpo-pop-clip}), using a PoP dataset constructed via iterative sampling.
    \item \textbf{DPO-PoP-random}: Margins are inferred from PoP supervision (Eq.~\ref{eq:dpo-pop-clip}), using a PoP dataset constructed via random sampling. This strategy can be interpreted as a bootstrapped version of the loss employed in DPO-margin-gt-scaled, along with a margin term (inside the log-sigmoid) that is inferred from preference-over-preference supervision.
\end{enumerate}

We provide the results for Llama-3.2-3b here. Results for Llama-3.1-8b are provided in Appendix \ref{ap:llama3.1-8b}.

\subsubsection{Discriminative Ability} \label{subsec: discriminative}

We evaluate DPO-PoP’s discriminative ability and margin correlation. For each preference $A \succ B$, we compare the UltraFeedback score difference (ground truth) with the DPO implicit reward difference (prediction). High correlation indicates better generalization and preference strength estimation. We report both Spearman and Pearson correlations. The correlation metrics are only possible in this setting due to access to UltraFeedback scores and cannot be computed in settings where ground-truth scores are unavailable; this analysis is provided purely for insight.

Table \ref{tab:dpo-margin-results-llama3.2-3b} shows that DPO-PoP-Iter attains the best test classification accuracy, outperforming the nearest baseline by \textbf{$\sim$8 percentage points}, and even outperforming DPO-margin-gt despite the latter having access to the true margin values.

The correlation metrics tell a different story: DPO-PoP-Random achieves the strongest Spearman and Pearson correlations, with DPO-PoP-Iter performing similarly on Spearman but substantially worse on Pearson. This suggests that DPO-PoP-Iter captures the correct ranking of preferences but its predicted margins are nonlinearly related to the true ones. We also see that DPO-PoP-Random exhibits lower accuracy but higher correlations overall. Figure \ref{fig:cum-accuracy-llama3.2} explains this tradeoff: DPO-PoP-Iter correctly classifies more weak-preference examples at the expense of stronger ones, whereas DPO-PoP-Random better captures strong preferences and is less influenced by noisy weak comparisons. As a result, DPO-PoP-Random maintains more faithful linear and ordinal relationships to the ground-truth margins, yielding superior Pearson and Spearman correlations.


\begin{figure*}[t]
    \centering
    \subfloat[Lower Cumulative Accuracy vs Margin]{%
        \includegraphics[width=0.48\textwidth, height=0.4\textwidth, keepaspectratio]{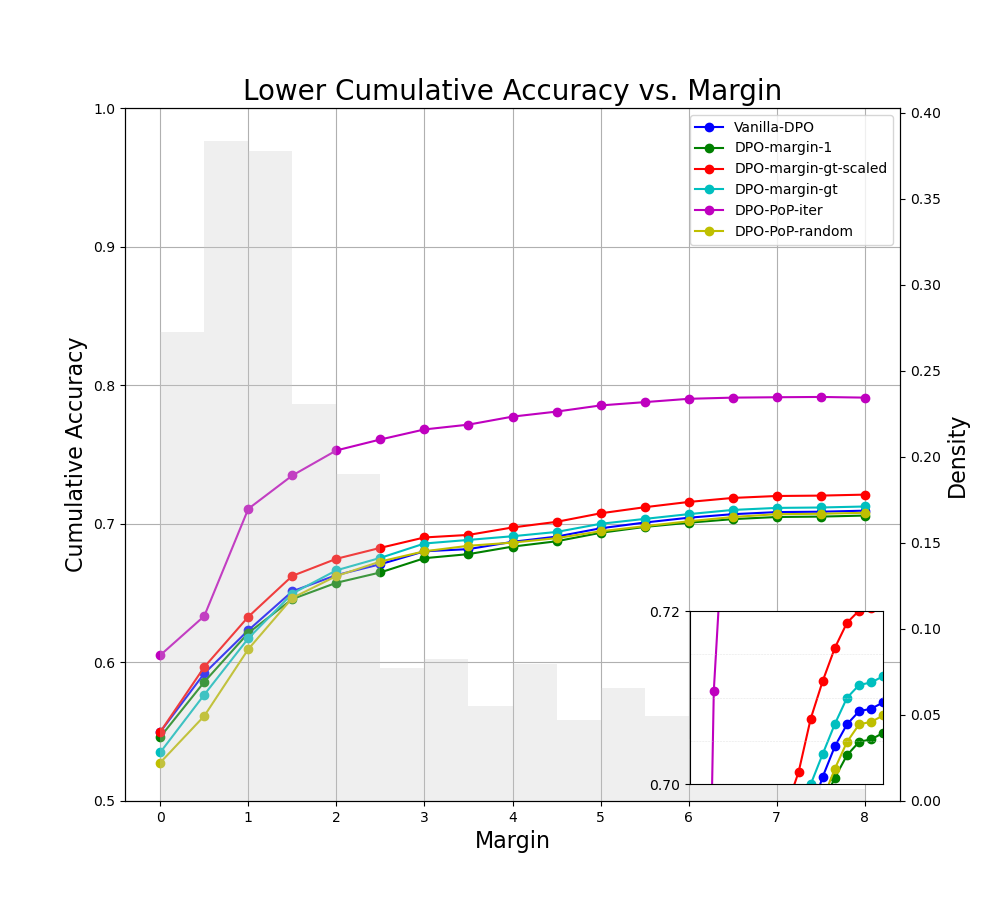}%
    }
    \hfill
    \subfloat[Upper Cumulative Accuracy vs Margin]{%
        \includegraphics[width=0.48\textwidth, height=0.4\textwidth, keepaspectratio]{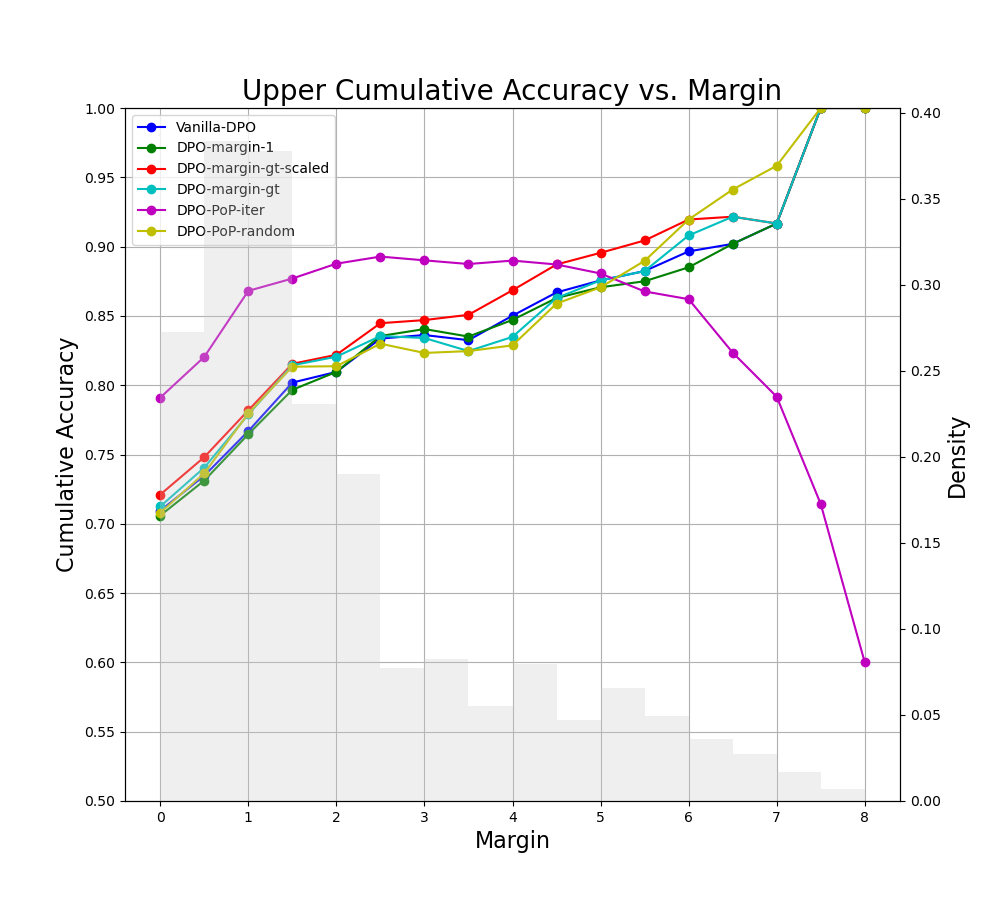}%
    }
    \caption{\small Cumulative Accuracy vs Margin for the different DPO variants considered. Lower Cumulative Accuracy at margin $m$ indicates the accuracy of predicting preference labels using only datapoints with ground-truth margin less than or equal to $m$. Conversely, Upper Cumulative Accuracy reflects prediction accuracy on datapoints with ground-truth margin greater than or equal to $m$. The dark grey histogram shows the distribution (density) of margin values in the test set. In plot (a), DPO-PoP-Iter achieves higher accuracy on datapoints with lower margins, while in plot (b), its performance drops for higher margin datapoints.}
    \label{fig:cum-accuracy-llama3.2}
\end{figure*}

We also report performance on RewardBench \citep{lambert2024rewardbenchevaluatingrewardmodels} in Table \ref{tab:dpo-rewardbench-llama3.2-3b}. DPO-PoP-Random outperforms all baselines in terms of the Overall score, improving over DPO-margin-gt (which has direct access to ground-truth margin values) by \textbf{1.4 percentage points}. Notably, DPO-PoP-iter heavily outperforms all methods on the Chat split but also strongly underperforms on the Reasoning split—which comprises a larger portion of the dataset—resulting in a lower Overall score compared to DPO-PoP-random. In contrast, DPO-PoP-random delivers stable performance across all categories, securing the highest Overall score.

\newcommand{\meanstd}[2]{#1{\scriptsize$\;\pm\;$#2}}







\begin{table}[t]  
\centering
\footnotesize       
\setlength{\tabcolsep}{2.0pt}  
\caption{Comparison of DPO variants on classification accuracy and Spearman, Pearson correlation with ground-truth margins for Llama-3.2-3b using synthetic data.}
\label{tab:dpo-margin-results-llama3.2-3b}
\begin{tabular}{p{2.75cm}ccc}
\toprule
\textbf{Algorithm} & \textbf{Pearson} & \textbf{Spearman} & \textbf{Accuracy\%} \\
\midrule
Vanilla-DPO          & \meanstd{0.2940}{0.004} & \meanstd{0.3003}{0.004} & \meanstd{71.15}{0.18} \\
DPO-margin-1         & \meanstd{0.2929}{0.004} & \meanstd{0.2984}{0.005} & \meanstd{71.18}{0.28} \\
DPO-margin-gt        & \meanstd{0.3427}{0.003} & \meanstd{0.3451}{0.003} & \meanstd{71.85}{0.34} \\
DPO-margin-gt-scaled & \meanstd{0.3381}{0.004} & \meanstd{0.3453}{0.003} & \meanstd{72.05}{0.16} \\
DPO-PoP-iter         & \meanstd{0.2449}{0.002} & \underline{\textbf{\boldmath \meanstd{0.3656}{0.001}}} & \underline{\textbf{\boldmath \meanstd{79.97}{0.41}}} \\
DPO-PoP-random       & \underline{\textbf{\boldmath \meanstd{0.3639}{0.002}}} & \underline{\textbf{\boldmath \meanstd{0.3685}{0.001}}} & \meanstd{71.09}{0.21} \\
\bottomrule
\end{tabular}
\end{table}

\begin{table*}[t]
\centering
\footnotesize
\caption{Performance of Llama-3.2-3b DPO variants on RewardBench, when using synthetic data. Higher is better.}
\label{tab:dpo-rewardbench-llama3.2-3b}
\begin{tabular}{lccccc}
\toprule
\textbf{Algorithm} & \textbf{Chat} & \textbf{Chat-Hard} & \textbf{Safety} & \textbf{Reason} & \textbf{Overall} \\
\midrule
Vanilla-DPO 
& \meanstd{75.65}{0.34} 
& \underline{\textbf{\boldmath \meanstd{64.51}{0.51}}} 
& \meanstd{71.49}{0.17} 
& \meanstd{75.85}{0.46} 
& \meanstd{75.46}{0.21} \\

DPO-margin-1 
& \meanstd{76.86}{0.54} 
& \meanstd{64.14}{0.21} 
& \meanstd{71.19}{0.86} 
& \meanstd{77.03}{0.23} 
& \meanstd{75.78}{0.29} \\

DPO-margin-gt 
& \meanstd{80.35}{0.38} 
& \meanstd{63.27}{0.21} 
& \meanstd{75.70}{0.31} 
& \underline{\textbf{\boldmath \meanstd{78.05}{0.47}}} 
& \meanstd{77.45}{0.25} \\

DPO-margin-gt-scaled 
& \meanstd{80.87}{0.55} 
& \meanstd{64.11}{0.53} 
& \meanstd{75.47}{0.46} 
& \meanstd{76.33}{0.27} 
& \meanstd{77.13}{0.29} \\

DPO-PoP-iter
& \underline{\textbf{\boldmath \meanstd{87.71}{0.53}}} 
& \meanstd{59.61}{0.50} 
& \meanstd{81.28}{0.62} 
& \meanstd{69.83}{1.35} 
& \meanstd{76.73}{0.24} \\

DPO-PoP-random 
& \meanstd{82.73}{0.80} 
& \meanstd{62.54}{0.63} 
& \underline{\textbf{\boldmath \meanstd{81.94}{1.07}}} 
& \meanstd{76.44}{0.69} 
& \underline{\textbf{\boldmath \meanstd{78.87}{0.25}}} \\
\bottomrule
\end{tabular}
\end{table*}


\subsubsection{Generative Ability}
We evaluate generative ability using UltraRM \citep{cui2024ultrafeedbackboostinglanguagemodels}, a larger reward model trained on UltraFeedback, as a gold RM judge similar in spirit to \citet{gao2022scalinglawsrewardmodel}. To validate UltraRM as a proxy, we measured its agreement with GPT-4.1-mini on 512 prompts (bootstrapped over 100k samples), obtaining 69.03\% $\pm$ 2.18\% agreement. We report win rate and median advantage over Vanilla-DPO (reference model), where advantage is the per-prompt UltraRM reward difference between each evaluated model and the reference model. Table~\ref{tab:dpo-gen-results-llama3.2-3b} shows that DPO-PoP-random outperforms all baselines, including those with access to ground-truth margins, in terms of win rate. AlpacaEval 2.0 results (Table~\ref{tab:dpo-alpaca-llama3.2-3b}) are consistent: DPO-PoP-random outperforms all baselines, achieving \textbf{14.2\% LC win rate} vs 12.4\% for its nearest competitor, DPO-margin-gt.




  
  

\begin{table*}[t]
\centering
\footnotesize
\caption{Generative performance of Llama-3.2-3b DPO variants using synthetic data.}
\label{tab:dpo-gen-synthetic}
\begin{subtable}[c]{0.44\textwidth}
\centering
\setlength{\tabcolsep}{3pt}
\caption{UltraRM win rate and median advantage.}
\label{tab:dpo-gen-results-llama3.2-3b}
\begin{tabular}{lcc}
\toprule
\textbf{Algorithm} & \textbf{Median Adv} & \textbf{Win Rate\%} \\
\midrule
DPO-margin-1         & 0.227 $\pm$ 0.020 & 54.9 $\pm$ 0.34 \\
DPO-margin-gt        & 0.586 $\pm$ 0.058 & 61.3 $\pm$ 1.15 \\
DPO-margin-gt-scaled & 0.160 $\pm$ 0.028 & 53.7 $\pm$ 0.64 \\
DPO-PoP-iter         & 0.389 $\pm$ 0.045 & 57.8 $\pm$ 0.88 \\
DPO-PoP-random       & \underline{\textbf{0.675 $\pm$ 0.051}} & \underline{\textbf{62.4 $\pm$ 1.12}} \\
\bottomrule
\end{tabular}
\end{subtable}
\hfill
\begin{subtable}[c]{0.54\textwidth}
\centering
\setlength{\tabcolsep}{2pt}
\caption{AlpacaEval 2.0 win rate.}
\label{tab:dpo-alpaca-llama3.2-3b}
\begin{tabular}{p{2.2cm}ccc}
\toprule
\textbf{Algorithm} & \textbf{LC-Win Rate\%} & \textbf{Win Rate\%} & \textbf{Avg Len} \\
\midrule
Vanilla-DPO          & \meanstd{11.74}{0.74} & \meanstd{11.37}{0.69} & \meanstd{1800}{17} \\
DPO-margin-1         & \meanstd{11.74}{1.04} & \meanstd{11.51}{1.04} & \meanstd{1823}{29} \\
DPO-margin-gt        & \meanstd{12.40}{0.71} & \meanstd{12.17}{0.58} & \meanstd{1915}{42} \\
DPO-margin-gt-scaled & \meanstd{10.99}{0.79} & \meanstd{10.97}{0.71} & \meanstd{1836}{19} \\
DPO-PoP-iter         & \meanstd{12.30}{0.70} & \meanstd{12.26}{0.62} & \meanstd{1919}{50} \\
DPO-PoP-random       & \underline{\textbf{\boldmath \meanstd{14.24}{1.06}}} & \underline{\textbf{\boldmath \meanstd{13.69}{1.02}}} & \meanstd{1846}{20} \\
\bottomrule
\end{tabular}
\end{subtable}
\end{table*}

In Tables~\ref{tab:dpo-gen-results-llama3.2-3b} and~\ref{tab:dpo-alpaca-llama3.2-3b}, we observe that DPO-PoP-iter underperforms compared to DPO-PoP-random and DPO-margin-gt. We hypothesize that this is due to correctly classifying weaker preferences at the expense of stronger preferences, as discussed in Section \ref{subsec: discriminative}. By potentially overfitting to noisy weaker preferences, DPO-PoP-iter suffers a drop in generative performance.

\subsection{LLM Annotated Preference over Preference Data Experiments}

Instead of using UltraFeedback scores to construct PoP labels, we obtain them directly from GPT-4.1-mini, serving as a test bed for realistic annotation settings. To keep annotation costs low, we randomly sample 5,000 preferences from UltraFeedback to train all models and construct the PoP dataset by randomly pairing preferences and querying the LLM ($k=2$, Llama-3.2-3b). Ties are discarded from the final PoP dataset. We focus on DPO-PoP-Random as its annotations are easier to obtain in practice than those required for the iterative variant. Additional experiments on robustness to annotator noise are in Appendix~\ref{ap:pop-noise}; the prompt used to gather PoP annotations is in Appendix~\ref{ap:pop-prompt}.

\subsubsection{Discriminative Performance}
The results showing the test classification accuracy on the UltraFeedback dataset \citep{cui2024ultrafeedbackboostinglanguagemodels} and RewardBench \citep{lambert2024rewardbenchevaluatingrewardmodels} scores are in Tables \ref{tab:dpo-margin-results-llama3.2-3b-llm-annotated} and \ref{tab:dpo-rewardbench-llama3.2-3b-llm-annotated} respectively. On test classification accuracy (Table~\ref{tab:dpo-margin-results-llama3.2-3b-llm-annotated}), DPO-margin-gt-scaled has an edge over other approaches. On RewardBench (Table~\ref{tab:dpo-rewardbench-llama3.2-3b-llm-annotated}), all approaches perform similarly in terms of Overall score.


\begin{table}[t]
\centering
\footnotesize
\setlength{\tabcolsep}{2pt} 
\caption{Comparison of DPO variants on classification accuracy and Spearman/Pearson correlation with ground-truth margins for Llama-3.2-3b, when using LLM annotated PoP data.}
\label{tab:dpo-margin-results-llama3.2-3b-llm-annotated}
\begin{tabular}{p{2.75cm}ccc} 
\toprule
\textbf{Algorithm} & \textbf{Pearson} & \textbf{Spearman} & \textbf{Accuracy\%} \\
\midrule
Vanilla DPO          & \meanstd{0.1171}{0.001} & \meanstd{0.1435}{0.001} & \meanstd{62.84}{0.17} \\
DPO-margin-1         & \meanstd{0.1035}{0.002} & \meanstd{0.1276}{0.003} & \meanstd{61.48}{0.40} \\
DPO-margin-gt        & \meanstd{0.1045}{0.003} & \meanstd{0.1243}{0.004} & \meanstd{61.33}{0.25} \\
DPO-margin-gt-scaled & \underline{\textbf{\boldmath\meanstd{0.1505}{0.002}}} & \underline{\textbf{\boldmath\meanstd{0.1727}{0.003}}} & \underline{\textbf{\boldmath\meanstd{64.02}{0.10}}} \\
DPO-PoP-random       & \underline{\textbf{\boldmath\meanstd{0.1408}{0.001}}} & \underline{\textbf{\boldmath\meanstd{0.1645}{0.001}}} & \meanstd{62.98}{0.05} \\
\bottomrule
\end{tabular}
\end{table}

\begin{table*}[t]
\centering
\footnotesize
\setlength{\tabcolsep}{4pt} 
\caption{Performance of Llama-3.2-3b DPO variants on RewardBench, when using LLM annotated PoP data. Higher is better. All approaches achieve similar Overall performance. DPO-PoP-Random outperforms on Reasoning, and DPO-margin-gt-scaled outperforms on Chat. The boldened and underlined values have $>$ 1 percentage point difference with the nearest competitor.}
\label{tab:dpo-rewardbench-llama3.2-3b-llm-annotated}
\begin{tabular}{lccccc}
\toprule
\textbf{Algorithm} & \textbf{Chat} & \textbf{Chat Hard} & \textbf{Safety} & \textbf{Reasoning} & \textbf{Overall} \\
\midrule
Vanilla-DPO 
& \meanstd{64.25}{1.11} & \meanstd{62.24}{0.53} & \meanstd{64.05}{0.55} & \meanstd{80.90}{0.76} & \meanstd{73.05}{0.56} \\

DPO-margin-1 
& \meanstd{61.87}{0.75} & \meanstd{63.38}{0.59} & \meanstd{63.67}{0.38} & \meanstd{83.00}{0.14} & \meanstd{73.49}{0.17} \\

DPO-margin-gt 
& \meanstd{61.08}{0.26} & \meanstd{63.85}{0.77} & \meanstd{63.74}{0.34} & \meanstd{84.01}{0.53} & \meanstd{73.77}{0.28} \\

DPO-margin-gt-scaled 
& \underline{\textbf{\boldmath\meanstd{67.83}{1.05}}} & \meanstd{62.21}{0.35} & \meanstd{64.89}{0.75} & \meanstd{80.32}{0.73} & \meanstd{73.51}{0.54} \\

DPO-PoP-random 
& \meanstd{58.10}{0.88} & \meanstd{63.60}{0.51} & \meanstd{62.48}{0.64} & \underline{\textbf{\boldmath\meanstd{85.19}{0.44}}} & \meanstd{73.57}{0.36} \\
\bottomrule
\end{tabular}
\end{table*}

\subsubsection{Generative Performance}

Tables~\ref{tab:dpo-gen-results-llama3.2-3b-llm-annotated} and~\ref{tab:dpo-alpaca-llama3.2-3b-llm-annotated} show UltraRM win rates and AlpacaEval 2.0 results respectively; DPO-PoP-Random outperforms all other baselines in generative quality across both evaluations, improving over its nearest competitor, DPO-margin-gt, by \textbf{over 5 percentage points} in UltraRM win rate.


\begin{table*}[t]
\centering
\footnotesize
\caption{Generative performance of Llama-3.2-3b DPO variants using LLM annotated PoP data.}
\label{tab:dpo-gen-llm-annotated}
\begin{subtable}[c]{0.40\textwidth}
\centering
\setlength{\tabcolsep}{3pt}
\caption{UltraRM win rate and median advantage.}
\label{tab:dpo-gen-results-llama3.2-3b-llm-annotated}
\begin{tabular}{lcc}
\toprule
\textbf{Algorithm} & \textbf{Median Adv} & \textbf{Win Rate\%} \\
\midrule
DPO-margin-1         & \meanstd{0.138}{0.058} & \meanstd{53.1}{1.1} \\
DPO-margin-gt        & \meanstd{0.347}{0.095} & \meanstd{56.9}{1.5} \\
DPO-margin-gt-scaled & \meanstd{0.044}{0.034} & \meanstd{50.7}{1.8} \\
DPO-PoP-Random       & \underline{\textbf{\boldmath\meanstd{0.741}{0.113}}} & \underline{\textbf{\boldmath\meanstd{62.2}{1.8}}} \\
\bottomrule
\end{tabular}
\end{subtable}
\hfill
\begin{subtable}[c]{0.57\textwidth}
\centering
\setlength{\tabcolsep}{2pt}
\caption{AlpacaEval 2.0 win rate.}
\label{tab:dpo-alpaca-llama3.2-3b-llm-annotated}
\begin{tabular}{p{2.2cm}ccc}
\toprule
\textbf{Algorithm} & \textbf{LC-Win Rate\%} & \textbf{Win Rate\%} & \textbf{Avg Len} \\
\midrule
Vanilla-DPO          & \meanstd{7.66}{0.98}  & \meanstd{6.38}{0.69}  & \meanstd{1455}{27} \\
DPO-margin-1         & \meanstd{8.37}{0.81}  & \meanstd{6.93}{0.65}  & \meanstd{1476}{17} \\
DPO-margin-gt        & \meanstd{10.40}{1.09} & \meanstd{8.68}{0.87}  & \meanstd{1518}{31} \\
DPO-margin-gt-scaled & \meanstd{8.09}{0.64}  & \meanstd{6.70}{0.47}  & \meanstd{1469}{24} \\
DPO-PoP-random       & \underline{\textbf{\boldmath\meanstd{11.07}{1.20}}} & \underline{\textbf{\boldmath\meanstd{9.39}{1.19}}} & \meanstd{1580}{29} \\
\bottomrule
\end{tabular}
\end{subtable}
\end{table*}



\subsection{Discrimination vs Generation}


Across all experiments, we observed that \emph{\uline{DPO-PoP-Random consistently improves generative performance and DPO-PoP-Iter consistently improves in-domain discriminative performance}}, reflecting a trade-off between the two objectives. Practitioners can choose the approach that best matches their use case. DPO-PoP-iter achieves good discriminative performance on in-distribution test data (Table~\ref{tab:dpo-margin-results-llama3.2-3b}), but overfitting to weaker preferences comes at the cost of generative quality. DPO-PoP-random achieves good generative performance (Tables~\ref{tab:dpo-gen-synthetic} and~\ref{tab:dpo-gen-llm-annotated}) and is also robust in terms of discriminative performance, as supported by the RewardBench results in Table~\ref{tab:dpo-rewardbench-llama3.2-3b}. We provide a discussion of this discriminative-generative tradeoff in Appendix ~\ref{ap: discussion} with corresponding theory in Appendix ~\ref{ap:theorem}. From Theorem ~\ref{thm:main}, we observe that the bound on the generalization performance of an adaptive-margin classifier depends on $\widetilde{M} = \sqrt{\sum_{i=1}^{N} \frac{1}{m_{i}^2}}$ , where $m_i$ are the margins of training data points. DPO-PoP-Random achieves a smaller $\widetilde{M}$, since it does not incorporate the large fraction of weak-preference (small-margin) comparisons included in DPO-PoP-Iter. As a result, its generalization error bound is lower, particularly when train–test distributions do not overlap substantially.

\subsection{Small-dataset setting}
DPO-PoP-Random achieves a \textbf{17 percentage point} improvement in win rate over DPO-margin-gt in the low-data regime, despite the latter having access to ground-truth margins from the UltraFeedback dataset (Appendix~\ref{ap:pop-data-scale}). This suggests that gathering PoP annotations is a simple and effective way to boost generative performance, particularly when working with small preference datasets.

\section{Related Work}
\label{sec: related_work}
Techniques that employ margins have largely been employed in the reward modeling phase of the RLHF pipeline. \citet{touvron2023llama} used margins derived from preference ratings given by human annotators, in order to train reward models, and showed that the margin term can help the helpfulness reward model accuracy, especially when the two responses are more separable. \citet{wang2025helpsteer2preferencecomplementingratingspreferences} propose Scaled Bradley-Terry loss, a margin based reward modeling objective that uses the margins derived from preference ratings in order to scale the loss for each datapoint. This can be seen as upsampling preferences for which the margin is higher. They show that the scaled loss variant leads to better performance than the margin loss variant proposed in \citet{touvron2023llama}. \citet{wang2024rewarddifferenceoptimizationsample} propose Reward Difference Optimization, that also uses a scaled loss, but uses margins computed from a learned reward model to scale each data point. 
DPO-PoP-random can be interpreted as a bootstrapped variant of the Scaled Bradley-Terry loss~\citep{wang2025helpsteer2preferencecomplementingratingspreferences, wang2024rewarddifferenceoptimizationsample}.
Other approaches compute margins in different ways. \cite{qin2024understandinginfluencerewardmargin} define the margin as the average difference between the rewards of the chosen and rejected responses within each training batch. \cite{wang2024secretsrlhflargelanguage} use an ensemble of reward models and calculate the margin as the average reward difference across the ensemble for each preference.

In the case of Direct Alignment Algorithms \citep{rafailov2024scalinglawsrewardmodel}, IPO \citep{azar2023generaltheoreticalparadigmunderstand} and SLiC \citep{zhao2023slic} can also be interpreted in terms of margin, wherein IPO regresses the difference of implicit rewards to a fixed margin, whereas SLiC uses hinge loss with a fixed margin. \citet{amini2024direct}, propose ODPO, which is a variant of DPO with an offset. They use a reward model to label the preference data and also to provide the margin values to be used in the ODPO loss. Another approach, $\alpha$-DPO \citep{wu2024alphadpoadaptiverewardmargin}, redefines the reference policy $\hat{\pi}_\text{ref}$ , to blend between the policy $\pi$ and the reference policy $\pi_\text{ref}$, to achieve personalized reward margins. \cite{wu2024betadpodirectpreferenceoptimization} observe that the optimal $\beta$ value for the DPO loss depends on the informativeness of the pairwise preference data, and they propose $\beta$-DPO, which dynamically calibrates $\beta$ at the batch level based on data quality. DPO-PoP instead infers margins from preference-over-preference annotations.

\section{Conclusion}

\label{sec: conclusion}


We introduced DPO-PoP, a framework that integrates adaptive margins into DPO via preference-over-preference (PoP) supervision. Unlike prior approaches that derive margins from scalar preference ratings, which can be noisy and inconsistently calibrated across annotators, DPO-PoP infers margins directly from ordinal comparisons between preferences. We explored two PoP sampling strategies: random and iterative. Our results reveal a tradeoff: DPO-PoP-iter improves in-domain discriminative performance at the expense of generative quality, whereas DPO-PoP-random achieves better generative performance than DPO baselines using fixed or score-derived margins and maintains robust discriminative accuracy on RewardBench. This tradeoff allows practitioners to choose the approach that best matches their use case. We demonstrate that PoP supervision provides fine-grained margin information through simple comparative judgments, offering a practical alternative to numerical rating-scale annotations, and yielding consistent improvements in discriminative and generative performance when the appropriate PoP variant is selected for the target use case.

\section*{Acknowledgements}

This work has taken place in the Safe, Correct, and Aligned Learning and Robotics Lab (SCALAR) at The University of Massachusetts Amherst. SCALAR research is supported in part by the NSF (IIS-2437426) and Open Philanthropy. The computational resources for this work were provided by the University of Massachusetts Amherst's partnership with the Unity Research Computing Platform, a multi-institutional cluster led by the University of Massachusetts and the University of Rhode Island.

Scott Niekum holds concurrent appointments as an Associate Professor at the University of Massachusetts Amherst and as an Amazon Scholar. This paper describes work performed at the University of Massachusetts Amherst and is not associated with Amazon.

\bibliographystyle{plainnat}
\bibliography{neurips_2026}

\begin{thebibliography}{41}
\providecommand{\natexlab}[1]{#1}
\providecommand{\url}[1]{\texttt{#1}}
\expandafter\ifx\csname urlstyle\endcsname\relax
  \providecommand{\doi}[1]{doi: #1}\else
  \providecommand{\doi}{doi: \begingroup \urlstyle{rm}\Url}\fi

\bibitem[Amini et~al.(2024)Amini, Vieira, and Cotterell]{amini2024direct}
Afra Amini, Tim Vieira, and Ryan Cotterell.
\newblock Direct preference optimization with an offset.
\newblock \emph{arXiv preprint arXiv:2402.10571}, 2024.

\bibitem[Azar et~al.(2023)Azar, Rowland, Piot, Guo, Calandriello, Valko, and
  Munos]{azar2023generaltheoreticalparadigmunderstand}
Mohammad~Gheshlaghi Azar, Mark Rowland, Bilal Piot, Daniel Guo, Daniele
  Calandriello, Michal Valko, and Rémi Munos.
\newblock A general theoretical paradigm to understand learning from human
  preferences, 2023.
\newblock URL \url{https://arxiv.org/abs/2310.12036}.

\bibitem[Bousquet et~al.(2004)Bousquet, Boucheron, and
  Lugosi]{Bousquet2004IntroductionTS}
Olivier Bousquet, St{\'e}phane Boucheron, and G{\'a}bor Lugosi.
\newblock Introduction to statistical learning theory.
\newblock In \emph{Advanced Lectures on Machine Learning}, 2004.
\newblock URL \url{https://api.semanticscholar.org/CorpusID:669378}.

\bibitem[Bradley and Terry(1952)]{bradley1952rank}
Ralph~Allan Bradley and Milton~E Terry.
\newblock Rank analysis of incomplete block designs: I. the method of paired
  comparisons.
\newblock \emph{Biometrika}, 39\penalty0 (3/4):\penalty0 324--345, 1952.

\bibitem[Burton et~al.(2019)Burton, Burton, Rigby, Sutherland, and
  Rhodes]{burton2019best}
Nichola Burton, Michael Burton, Dan Rigby, Clare~AM Sutherland, and Gillian
  Rhodes.
\newblock Best-worst scaling improves measurement of first impressions.
\newblock \emph{Cognitive research: principles and implications}, 4\penalty0
  (1):\penalty0 36, 2019.

\bibitem[Chittepu et~al.(2025)Chittepu, Metevier, Schwarzer, Hoag, Niekum, and
  Thomas]{chittepu2025reinforcementlearninghumanfeedback}
Yaswanth Chittepu, Blossom Metevier, Will Schwarzer, Austin Hoag, Scott Niekum,
  and Philip~S. Thomas.
\newblock Reinforcement learning from human feedback with high-confidence
  safety constraints, 2025.
\newblock URL \url{https://arxiv.org/abs/2506.08266}.

\bibitem[Cortes and Vapnik(1995)]{cortes1995support}
Corinna Cortes and Vladimir Vapnik.
\newblock Support-vector networks.
\newblock \emph{Machine learning}, 20:\penalty0 273--297, 1995.

\bibitem[Cui et~al.(2024)Cui, Yuan, Ding, Yao, He, Zhu, Ni, Xie, Xie, Lin, Liu,
  and Sun]{cui2024ultrafeedbackboostinglanguagemodels}
Ganqu Cui, Lifan Yuan, Ning Ding, Guanming Yao, Bingxiang He, Wei Zhu, Yuan Ni,
  Guotong Xie, Ruobing Xie, Yankai Lin, Zhiyuan Liu, and Maosong Sun.
\newblock Ultrafeedback: Boosting language models with scaled ai feedback,
  2024.
\newblock URL \url{https://arxiv.org/abs/2310.01377}.

\bibitem[Dai et~al.(2023)Dai, Pan, Sun, Ji, Xu, Liu, Wang, and
  Yang]{dai2023saferlhfsafereinforcement}
Josef Dai, Xuehai Pan, Ruiyang Sun, Jiaming Ji, Xinbo Xu, Mickel Liu, Yizhou
  Wang, and Yaodong Yang.
\newblock Safe rlhf: Safe reinforcement learning from human feedback, 2023.
\newblock URL \url{https://arxiv.org/abs/2310.12773}.

\bibitem[Dubois et~al.(2025)Dubois, Galambosi, Liang, and
  Hashimoto]{dubois2025lengthcontrolledalpacaevalsimpleway}
Yann Dubois, Balázs Galambosi, Percy Liang, and Tatsunori~B. Hashimoto.
\newblock Length-controlled alpacaeval: A simple way to debias automatic
  evaluators, 2025.
\newblock URL \url{https://arxiv.org/abs/2404.04475}.

\bibitem[e~Costa and Vansnick(1994)]{banae1994macbeth}
Carlos A~Bana e~Costa and Jean-Claude Vansnick.
\newblock Macbeth-an interactive path towards the construction of cardinal
  value functions.
\newblock \emph{International transactions in operational Research}, 1\penalty0
  (4):\penalty0 489--500, 1994.

\bibitem[Freund et~al.(1996)Freund, Schapire, et~al.]{freund1996experiments}
Yoav Freund, Robert~E Schapire, et~al.
\newblock Experiments with a new boosting algorithm.
\newblock In \emph{icml}, volume~96, pages 148--156. Citeseer, 1996.

\bibitem[Gao et~al.(2022)Gao, Schulman, and
  Hilton]{gao2022scalinglawsrewardmodel}
Leo Gao, John Schulman, and Jacob Hilton.
\newblock Scaling laws for reward model overoptimization, 2022.
\newblock URL \url{https://arxiv.org/abs/2210.10760}.

\bibitem[Grattafiori et~al.(2024)]{grattafiori2024llama3herdmodels}
Aaron Grattafiori et~al.
\newblock The llama 3 herd of models, 2024.
\newblock URL \url{https://arxiv.org/abs/2407.21783}.

\bibitem[Ha and Blanz(2021)]{ha2021deep}
Mai~Lan Ha and Volker Blanz.
\newblock Deep ranking with adaptive margin triplet loss.
\newblock \emph{arXiv preprint arXiv:2107.06187}, 2021.

\bibitem[Herbrich and Weston(1999)]{westonASV}
R.~Herbrich and J.~Weston.
\newblock Adaptive margin support vector machines for classification.
\newblock In \emph{1999 Ninth International Conference on Artificial Neural
  Networks ICANN 99. (Conf. Publ. No. 470)}, volume~2, pages 880--885 vol.2,
  1999.
\newblock \doi{10.1049/cp:19991223}.

\bibitem[Huang et~al.(2020)Huang, Wang, Tai, Liu, Shen, Li, Li, and
  Huang]{huang2020curricularface}
Yuge Huang, Yuhan Wang, Ying Tai, Xiaoming Liu, Pengcheng Shen, Shaoxin Li,
  Jilin Li, and Feiyue Huang.
\newblock Curricularface: adaptive curriculum learning loss for deep face
  recognition.
\newblock In \emph{proceedings of the IEEE/CVF conference on computer vision
  and pattern recognition}, pages 5901--5910, 2020.

\bibitem[Jaques et~al.(2019)Jaques, Ghandeharioun, Shen, Ferguson, Lapedriza,
  Jones, Gu, and Picard]{Jaques2019WayOB}
Natasha Jaques, Asma Ghandeharioun, Judy~Hanwen Shen, Craig Ferguson, {\`A}gata
  Lapedriza, Noah~J. Jones, Shixiang~Shane Gu, and Rosalind~W. Picard.
\newblock Way off-policy batch deep reinforcement learning of implicit human
  preferences in dialog.
\newblock \emph{ArXiv}, abs/1907.00456, 2019.
\newblock URL \url{https://api.semanticscholar.org/CorpusID:195766797}.

\bibitem[Kiritchenko and Mohammad(2017)]{kiritchenko2017best}
Svetlana Kiritchenko and Saif~M Mohammad.
\newblock Best-worst scaling more reliable than rating scales: A case study on
  sentiment intensity annotation.
\newblock \emph{arXiv preprint arXiv:1712.01765}, 2017.

\bibitem[Lambert et~al.(2024)Lambert, Pyatkin, Morrison, Miranda, Lin, Chandu,
  Dziri, Kumar, Zick, Choi, Smith, and
  Hajishirzi]{lambert2024rewardbenchevaluatingrewardmodels}
Nathan Lambert, Valentina Pyatkin, Jacob Morrison, LJ~Miranda, Bill~Yuchen Lin,
  Khyathi Chandu, Nouha Dziri, Sachin Kumar, Tom Zick, Yejin Choi, Noah~A.
  Smith, and Hannaneh Hajishirzi.
\newblock Rewardbench: Evaluating reward models for language modeling, 2024.
\newblock URL \url{https://arxiv.org/abs/2403.13787}.

\bibitem[Ledoux and Talagrand(1991)]{Ledoux1991ProbabilityIB}
Michel Ledoux and Michel Talagrand.
\newblock Probability in banach spaces: Isoperimetry and processes.
\newblock In \emph{Probability in Banach Spaces}, 1991.
\newblock URL \url{https://api.semanticscholar.org/CorpusID:118526268}.

\bibitem[Li et~al.(2023)Li, Xu, and Zhang]{li2023towards}
Zhihui Li, Zhongli Xu, and Han Zhang.
\newblock Towards a theoretical understanding of the generalization of
  reinforcement learning from human feedback, 2023.
\newblock URL \url{https://arxiv.org/abs/2310.00036}.

\bibitem[Ouyang et~al.(2022)Ouyang, Wu, Jiang, Almeida, Wainwright, Mishkin,
  Zhang, Agarwal, Slama, Ray, et~al.]{ouyang2022training}
Long Ouyang, Jeffrey Wu, Xu~Jiang, Diogo Almeida, Carroll Wainwright, Pamela
  Mishkin, Chong Zhang, Sandhini Agarwal, Katarina Slama, Alex Ray, et~al.
\newblock Training language models to follow instructions with human feedback.
\newblock \emph{Advances in neural information processing systems},
  35:\penalty0 27730--27744, 2022.

\bibitem[Peng et~al.(2019)Peng, Kumar, Zhang, and
  Levine]{peng2019advantageweightedregressionsimplescalable}
Xue~Bin Peng, Aviral Kumar, Grace Zhang, and Sergey Levine.
\newblock Advantage-weighted regression: Simple and scalable off-policy
  reinforcement learning, 2019.
\newblock URL \url{https://arxiv.org/abs/1910.00177}.

\bibitem[Peters and Schaal(2007)]{peters2007reinforcement}
Jan Peters and Stefan Schaal.
\newblock Reinforcement learning by reward-weighted regression for operational
  space control.
\newblock In \emph{Proceedings of the 24th international conference on Machine
  learning}, pages 745--750, 2007.

\bibitem[Qin et~al.(2024)Qin, Feng, and
  Yang]{qin2024understandinginfluencerewardmargin}
Bowen Qin, Duanyu Feng, and Xi~Yang.
\newblock Towards understanding the influence of reward margin on preference
  model performance, 2024.
\newblock URL \url{https://arxiv.org/abs/2404.04932}.

\bibitem[Rafailov et~al.(2024{\natexlab{a}})Rafailov, Chittepu, Park, Sikchi,
  Hejna, Knox, Finn, and Niekum]{rafailov2024scalinglawsrewardmodel}
Rafael Rafailov, Yaswanth Chittepu, Ryan Park, Harshit Sikchi, Joey Hejna,
  Bradley Knox, Chelsea Finn, and Scott Niekum.
\newblock Scaling laws for reward model overoptimization in direct alignment
  algorithms, 2024{\natexlab{a}}.
\newblock URL \url{https://arxiv.org/abs/2406.02900}.

\bibitem[Rafailov et~al.(2024{\natexlab{b}})Rafailov, Sharma, Mitchell, Ermon,
  Manning, and Finn]{rafailov2024directpreferenceoptimizationlanguage}
Rafael Rafailov, Archit Sharma, Eric Mitchell, Stefano Ermon, Christopher~D.
  Manning, and Chelsea Finn.
\newblock Direct preference optimization: Your language model is secretly a
  reward model, 2024{\natexlab{b}}.
\newblock URL \url{https://arxiv.org/abs/2305.18290}.

\bibitem[Schapire et~al.(1998)Schapire, Freund, Bartlett, and
  Lee]{schapire1998boosting}
Robert~E Schapire, Yoav Freund, Peter Bartlett, and Wee~Sun Lee.
\newblock Boosting the margin: A new explanation for the effectiveness of
  voting methods.
\newblock \emph{The annals of statistics}, pages 1651--1686, 1998.

\bibitem[Shalev-Shwartz and Ben-David(2014)]{shalev2014understanding}
Shai Shalev-Shwartz and Shai Ben-David.
\newblock \emph{Understanding machine learning: From theory to algorithms}.
\newblock Cambridge university press, 2014.

\bibitem[Stiennon et~al.(2022)Stiennon, Ouyang, Wu, Ziegler, Lowe, Voss,
  Radford, Amodei, and Christiano]{stiennon2022learningsummarizehumanfeedback}
Nisan Stiennon, Long Ouyang, Jeff Wu, Daniel~M. Ziegler, Ryan Lowe, Chelsea
  Voss, Alec Radford, Dario Amodei, and Paul Christiano.
\newblock Learning to summarize from human feedback, 2022.
\newblock URL \url{https://arxiv.org/abs/2009.01325}.

\bibitem[Touvron et~al.(2023)Touvron, Martin, Stone, Albert, Almahairi, Babaei,
  Bashlykov, Batra, Bhargava, Bhosale, et~al.]{touvron2023llama}
Hugo Touvron, Louis Martin, Kevin Stone, Peter Albert, Amjad Almahairi, Yasmine
  Babaei, Nikolay Bashlykov, Soumya Batra, Prajjwal Bhargava, Shruti Bhosale,
  et~al.
\newblock Llama 2: Open foundation and fine-tuned chat models.
\newblock \emph{arXiv preprint arXiv:2307.09288}, 2023.

\bibitem[Wadhwa et~al.(2024)Wadhwa, Chen, Li, and
  Durrett]{wadhwa2024usingnaturallanguageexplanations}
Manya Wadhwa, Jifan Chen, Junyi~Jessy Li, and Greg Durrett.
\newblock Using natural language explanations to rescale human judgments, 2024.
\newblock URL \url{https://arxiv.org/abs/2305.14770}.

\bibitem[Wang et~al.(2024{\natexlab{a}})Wang, Zheng, Chen, Liu, Dou, Huang,
  Shen, Jin, Zhou, Shi, Gao, Xu, Zhou, Fan, Xi, Zhao, Wang, Ji, Yan, Shen,
  Chen, Gui, Zhang, Qiu, Huang, Wu, and
  Jiang]{wang2024secretsrlhflargelanguage}
Binghai Wang, Rui Zheng, Lu~Chen, Yan Liu, Shihan Dou, Caishuang Huang, Wei
  Shen, Senjie Jin, Enyu Zhou, Chenyu Shi, Songyang Gao, Nuo Xu, Yuhao Zhou,
  Xiaoran Fan, Zhiheng Xi, Jun Zhao, Xiao Wang, Tao Ji, Hang Yan, Lixing Shen,
  Zhan Chen, Tao Gui, Qi~Zhang, Xipeng Qiu, Xuanjing Huang, Zuxuan Wu, and
  Yu-Gang Jiang.
\newblock Secrets of rlhf in large language models part ii: Reward modeling,
  2024{\natexlab{a}}.
\newblock URL \url{https://arxiv.org/abs/2401.06080}.

\bibitem[Wang et~al.(2024{\natexlab{b}})Wang, Zhang, Zhao, Tan, and
  Nguyen]{wang2024rewarddifferenceoptimizationsample}
Shiqi Wang, Zhengze Zhang, Rui Zhao, Fei Tan, and Cam~Tu Nguyen.
\newblock Reward difference optimization for sample reweighting in offline
  rlhf, 2024{\natexlab{b}}.
\newblock URL \url{https://arxiv.org/abs/2408.09385}.

\bibitem[Wang et~al.(2025)Wang, Bukharin, Delalleau, Egert, Shen, Zeng,
  Kuchaiev, and
  Dong]{wang2025helpsteer2preferencecomplementingratingspreferences}
Zhilin Wang, Alexander Bukharin, Olivier Delalleau, Daniel Egert, Gerald Shen,
  Jiaqi Zeng, Oleksii Kuchaiev, and Yi~Dong.
\newblock Helpsteer2-preference: Complementing ratings with preferences, 2025.
\newblock URL \url{https://arxiv.org/abs/2410.01257}.

\bibitem[Wu et~al.(2024{\natexlab{a}})Wu, Wang, Yang, Wu, Gao, Ding, Wang, and
  He]{wu2024alphadpoadaptiverewardmargin}
Junkang Wu, Xue Wang, Zhengyi Yang, Jiancan Wu, Jinyang Gao, Bolin Ding, Xiang
  Wang, and Xiangnan He.
\newblock $\alpha$-dpo: Adaptive reward margin is what direct preference
  optimization needs, 2024{\natexlab{a}}.
\newblock URL \url{https://arxiv.org/abs/2410.10148}.

\bibitem[Wu et~al.(2024{\natexlab{b}})Wu, Xie, Yang, Wu, Gao, Ding, Wang, and
  He]{wu2024betadpodirectpreferenceoptimization}
Junkang Wu, Yuexiang Xie, Zhengyi Yang, Jiancan Wu, Jinyang Gao, Bolin Ding,
  Xiang Wang, and Xiangnan He.
\newblock $\beta$-dpo: Direct preference optimization with dynamic $\beta$,
  2024{\natexlab{b}}.
\newblock URL \url{https://arxiv.org/abs/2407.08639}.

\bibitem[Zhang et~al.(2019)Zhang, Zhao, Qiao, Wang, and
  Li]{zhang2019adacosadaptivelyscalingcosine}
Xiao Zhang, Rui Zhao, Yu~Qiao, Xiaogang Wang, and Hongsheng Li.
\newblock Adacos: Adaptively scaling cosine logits for effectively learning
  deep face representations, 2019.
\newblock URL \url{https://arxiv.org/abs/1905.00292}.

\bibitem[Zhao et~al.(2023)Zhao, Joshi, Liu, Khalman, Saleh, and
  Liu]{zhao2023slic}
Yao Zhao, Rishabh Joshi, Tianqi Liu, Misha Khalman, Mohammad Saleh, and Peter~J
  Liu.
\newblock Slic-hf: Sequence likelihood calibration with human feedback.
\newblock \emph{arXiv preprint arXiv:2305.10425}, 2023.

\bibitem[Zhu et~al.(2023)Zhu, Jiao, and Jordan]{zhu2023principled}
Banghua Zhu, Jiantao Jiao, and Michael~I. Jordan.
\newblock Principled reinforcement learning with human feedback from pairwise
  or k-wise comparisons, 2023.
\newblock URL \url{https://arxiv.org/abs/2301.11270}.

\end{thebibliography}







\newpage
\appendix
\onecolumn
\label{sec: appendix}


\section{Impact Statement and Limitations}\label{ap:impact}

This work proposes a method for improving the alignment of large language models (LLMs) with human preferences. Better-aligned LLMs have broad positive societal impacts, including more helpful and reliable AI assistants. By enabling more fine-grained margin information through simple comparative annotations, our approach reduces reliance on numerical scores, which can be inconsistently calibrated, and thus supports more robust alignment pipelines.

As with any work that improves LLM capabilities, there is a risk that better-aligned models could be misused to generate more persuasive misinformation or harmful content. However, this risk is not specific to our contribution and applies broadly to LLM alignment research. We do not foresee direct negative societal impacts specific to the proposed annotation and training methodology.

\textbf{Limitations.} The theoretical generalization bound (Appendix~\ref{ap:theorem}) assumes a linear reward function, whereas practical reward models are large neural networks. This is a standard assumption in the analysis of generalization error for adaptive margin classifiers \citep{zhu2023principled, li2023towards, Bousquet2004IntroductionTS, shalev2014understanding}, and the bound therefore provides intuition for the empirical findings rather than direct guarantees. Additionally, computing the margin in DPO-PoP requires one additional forward pass through the target policy per training step, incurring a modest increase in compute relative to standard DPO.

\section{Large Language Model Usage Statement}\label{ap:llm-usage}

Large Language Models (LLMs) were used for grammatical editing and improving writing flow. In addition, they were used to verify the correctness of our theoretical proof for the generalization bounds on performance of adaptive-margin alignment algorithms. They were also used to gather PoP annotations in our LLM annotated PoP experiments. The research methodology, experimental design, data analysis, and all scientific conclusions are entirely the work of the human authors.

\section{Experiment Details}\label{ap:exp_details}

The hyperparameters used in our experiments for SFT and DPO are provided in Table \ref{tab:hyperparams-sft} and Table \ref{tab:hyperparams-dpo} respectively. For DPO-PoP, we used the same hyperparameters used for DPO. For the DPO-PoP specific hyperparameters we set the clipping threshold $M_\text{max}=10$, the Polyak averaging decay rate to $\tau=0.99$, and the size of the PoP dataset to $120,000$ (twice the size of the preference dataset in UltraFeedback, i.e $k=2$). All models were trained using 4 Nvidia A100 80G GPUs. We also ablate the compute budget: since DPO-PoP ($k=2$) trains on twice the data as the baselines, we trained baselines for 2 epochs to match this budget. However, performance generally degraded with 2 epochs of training, motivating our choice of 1 epoch for all baselines. Full results are in Appendix~\ref{ap:compute-parity}. Our codebase is built on top of the Safe-RLHF repository \citep{dai2023saferlhfsafereinforcement} (\url{https://github.com/pku-alignment/safe-rlhf}). Our modified code base is available at $\underline{\textit{removed for review}}$

\begin{table}[h]
\centering
\begin{tabular}{lc}
\toprule
\textbf{Hyperparameter} & \textbf{Value} \\
\midrule
Optimizer & AdamW \\
Epochs & 1 \\
Max Sequence Length & 2048 \\
Per-device Train Batch Size & 2 \\
Per-device Eval Batch Size & 2 \\
Gradient Accumulation Steps & 8 \\
Gradient Checkpointing & True \\
Num GPUs & 4 \\
Learning Rate & 2e-5 \\
Learning Rate Scheduler & Cosine \\
Weight Decay & 0 \\
\bottomrule
\end{tabular}
\caption{Training hyperparameters used for SFT}
\label{tab:hyperparams-sft}
\end{table}

\begin{table}[h]
\centering
\begin{tabular}{lc}
\toprule
\textbf{Hyperparameter} & \textbf{Value} \\
\midrule
Optimizer & AdamW \\
Epochs & 1 \\
Max Sequence Length & 2048 \\
Per-device Train Batch Size & 2 \\
Per-device Eval Batch Size & 2 \\
Gradient Accumulation Steps & 8 \\
Gradient Checkpointing & True \\
Num GPUs & 4 \\
Learning Rate & 1e-6 \\
Learning Rate Scheduler & Cosine \\
Learning Rate Warmup Ratio & 0.03 \\
Weight Decay & 0.05 \\
Beta & 0.1 \\
\bottomrule
\end{tabular}
\caption{Training hyperparameters used for DPO}
\label{tab:hyperparams-dpo}
\end{table}

\section{Reinforcement Learning from Human Feedback}\label{ap:rlhf}

Reinforcement Learning from Human Feedback (RLHF) \citep{ouyang2022training} is the predominant paradigm for aligning language models with human intent. The RLHF pipeline typically begins with a pre-trained language model trained on an internet-scale corpus and proceeds through three stages. We briefly describe each stage below:

\textbf{Supervised Fine Tuning} In the SFT stage, the model is fine-tuned to follow instructions by autoregressively predicting the next token in a sequence using Maximum Likelihood Estimation (MLE). This stage uses a dataset $\mathcal{D}_{\text{SFT}}$ consisting of prompt-response pairs $(x, y)$, where $x$ is a prompt and $y$ is a high-quality response. These responses are either human-annotated or generated by large language models.

\textbf{Reward Modeling} In the reward modeling stage, a reward model is trained to assign scalar scores to prompt-response pairs, indicating how well a response aligns with human preferences. This process relies on a preference dataset $\mathcal{D}_{\text{pref}} = {(x_i, y^+_i, y^-_i)}_{i=1}^{N}$, where $x_i$ is a prompt, $y^+_i$ is the preferred response, and $y^-_i$ is the dispreferred response. Preference labels are typically provided by human annotators or large language models. The Bradley-Terry (BT) model \citep{bradley1952rank} is commonly used to model the likelihood of observed preferences.

\begin{equation}
    P(y^{+} \succ y^{-}) = \frac{e^{r(x,y^{+})}}{e^{r(x,y^{+})}+e^{r(x,y^{-})}} = \sigma(r(x,y^{+})-r(x,y^{-}))
\end{equation}

Here, $r$ denotes the 
reward assigned to a prompt-response pair, and $\sigma$ denotes the logistic (sigmoid) function. We parameterize the reward function as $r_\phi$, where $\phi$ represents the model parameters, and use it to approximate the ground-truth reward function. The reward model is trained by maximizing the likelihood of the observed preference data under the Bradley-Terry model.

\begin{equation}
    \min_{\phi} -\mathbb{E}_{(x,y^{+},y^{-}) \sim \mathcal{D}_\text{pref}}[\log \sigma(r_{\phi}(x,y^{+})-r_{\phi}(x,y^{-}))]
\end{equation}

\textbf{Reinforcement Learning} In the reinforcement learning stage, the language model is optimized to generate responses that maximize the reward assigned by the learned reward model $r_{\phi}$. However, directly optimizing for this reward can degrade response quality, as the policy may overfit to imperfections in the learned reward function and begin producing 
unnatural outputs \citep{Jaques2019WayOB, stiennon2022learningsummarizehumanfeedback}.

To mitigate this, a KL divergence constraint is added to ensure that the updated policy does not deviate too far from a reference policy, usually taken to be the supervised fine-tuning (SFT) policy. The resulting RL objective, with a KL penalty coefficient $\beta$, is given by:

\begin{equation}\label{eq:rl}
    \max_{\theta} \mathbb{E}_{x\sim\mathcal{D},y\sim\pi_{\theta}(.\vert x)}[r_{\phi}(x,y)] - \beta \mathbb{D}_{KL}[\pi_{\theta}(y\vert x)\vert\vert \pi_{ref}(y\vert x)]
\end{equation}

Additionally, some approaches \citep{chittepu2025reinforcementlearninghumanfeedback, dai2023saferlhfsafereinforcement} enforce safety and harmlessness by augmenting the objective in Equation~\ref{eq:rl} with an explicit cost constraint.

\section{Results for Llama-3.1-8b}\label{ap:llama3.1-8b}

\subsection{Discriminative Performance}

The results showing the test classification accuracy on the UltraFeedback dataset \citep{cui2024ultrafeedbackboostinglanguagemodels} and RewardBench \citep{lambert2024rewardbenchevaluatingrewardmodels} scores are in Tables \ref{tab:dpo-margin-results-llama3.1-8b} and \ref{tab:dpo-rewardbench-llama3.1-8b} respectively. 

\begin{table}[h]
\centering
\caption{Comparison of DPO variants on classification accuracy and Spearman, Pearson correlation with ground-truth margins for Llama-3.1-8b, when using synthetic data.}
\label{tab:dpo-margin-results-llama3.1-8b}
\begin{tabular}{lccc}
\toprule
\textbf{Algorithm} & \textbf{Pearson Correlation} & \textbf{Spearman Correlation} & \textbf{Accuracy\%} \\
\midrule
Vanilla DPO & 0.3151 & 0.3244 & 69.0 \\
DPO-margin-1 & 0.3161 & 0.3243 & 69.0 \\
DPO-margin-gt &  0.3791 & 0.3715 & 70.0 \\
DPO-margin-gt-scaled & 0.3633 & 0.3669 & 71.0 \\
DPO-PoP-iter & 0.2183 & \underline{\textbf{0.3868}} & \underline{\textbf{82.0}} \\
DPO-PoP-random & \underline{\textbf{0.3962}} & \underline{\textbf{0.3871}} & 71.0 \\
\bottomrule
\end{tabular}
\end{table}

\begin{table}[h]
\centering
\caption{Performance of Llama-3.1-8b DPO variants on RewardBench, when using synthetic data. Higher is better.}
\label{tab:dpo-rewardbench-llama3.1-8b}
\begin{tabular}{lccccc}
\toprule
\textbf{Model} & \textbf{Chat} & \textbf{Chat Hard} & \textbf{Safety} & \textbf{Reasoning} & \textbf{Overall} \\
\midrule
Vanilla-DPO & 73.46 & 63.60 & 57.03 & 76.69 & 71.59 \\
DPO-margin-1 & 71.23 & 62.94 & 57.16 & \underline{\textbf{77.07}} & 71.39 \\
DPO-margin-gt & 79.05 & 65.79 & 60.95 & 76.84 & 73.67 \\
DPO-margin-gt-scaled & 76.26 & 62.28 & 62.43 & 76.11 & 72.96 \\
DPO-PoP-iter & \underline{\textbf{86.59}} & 61.84 & \underline{\textbf{72.03}} & 72.05 & 75.41 \\
DPO-PoP-random & 81.56 & \underline{\textbf{66.89}} & 68.51 & 76.95 & \underline{\textbf{76.25}} \\
\bottomrule
\end{tabular}
\end{table}

\subsection{Generative Performance}

The results displaying the win rate of the model responses as judged by UltraRM \citep{cui2024ultrafeedbackboostinglanguagemodels} and AlpacaEval 2.0 win rates \citep{dubois2025lengthcontrolledalpacaevalsimpleway} are in Tables \ref{tab:dpo-gen-results-llama3.1-8b} and \ref{tab:dpo-alpaca-llama3.1-8b} respectively.

\begin{table}[H]
\centering
\caption{Comparison of margin-based DPO variants against Vanilla DPO on median advantage and win rate for Llama-3.1-8b, when using synthetic data.}
\label{tab:dpo-gen-results-llama3.1-8b}
\begin{tabular}{lcc}
\toprule
\textbf{Method} & \textbf{Median Advantage} & \textbf{Win Rate \%} \\
\midrule
DPO-margin-1 & 0.2813 & 55\% \\
DPO-margin-gt & 0.5000 & 59\% \\
DPO-margin-gt-scaled & 0.0938 & 52\% \\
DPO-PoP-iter & 0.3496 & 56\% \\
DPO-PoP-random & \underline{\textbf{0.7500}} & \underline{\textbf{63\%}} \\
\bottomrule
\end{tabular}
\end{table}

\begin{table}[H]
\centering
\caption{Performance of Llama-3.1-8b DPO variants on the AlpacaEval 2.0 benchmark, when using synthetic data.}
\label{tab:dpo-alpaca-llama3.1-8b}
\begin{tabular}{lccc}
\toprule
\textbf{Experiment} & \textbf{Length-Controlled Win Rate \%} & \textbf{Win Rate \%} & \textbf{Avg Length} \\
\midrule
Vanilla-DPO & 10.38 & 10.56 & 1869 \\
DPO-margin-1 & 11.07 & 11.06 & 1864 \\
DPO-margin-gt & 11.23 & 11.30 & 1825 \\
DPO-margin-gt-scaled & 10.95 & 11.43 & 1881 \\
DPO-PoP-iter & 12.89 & 13.42 & 2004 \\
DPO-PoP-random & \underline{\textbf{14.62}} & \underline{\textbf{14.78}} & 1909 \\
\bottomrule
\end{tabular}
\end{table}

\section{Compute-Matched Baseline Evaluation}\label{ap:compute-parity}

Since DPO-PoP with $k=2$ trains on a dataset twice the size of the preference dataset, its effective training budget is doubled relative to baselines trained for a single epoch. To verify that the improvements of DPO-PoP-Random are not solely attributable to this compute advantage, we trained all DPO baselines for 2 epochs on the full UltraFeedback preference dataset. Table~\ref{tab:compute-parity} reports mean UltraRM win rates, RewardBench Overall scores, and test classification accuracy for each baseline at 1 and 2 epochs, alongside both DPO-PoP variants trained for a single epoch.

{\newcommand{\inc}[1]{\textcolor{green!50!black}{$\uparrow$}}
\newcommand{\dec}[1]{\textcolor{red}{$\downarrow$}}
\begin{table*}[h]
\centering
\footnotesize
\caption{UltraRM win rate, RewardBench Overall, and test classification accuracy for DPO baselines at 1 vs.\ 2 epochs, compared to DPO-PoP variants at 1 epoch (Llama-3.2-3b, synthetic data). Colored deltas show change relative to 1-epoch performance.}
\label{tab:compute-parity}
\setlength{\tabcolsep}{4pt}
\begin{tabular}{lcccccc}
\toprule
\textbf{Algorithm} & \multicolumn{2}{c}{\textbf{UltraRM Win Rate\%}} & \multicolumn{2}{c}{\textbf{RewardBench Overall}} & \multicolumn{2}{c}{\textbf{Test Accuracy\%}} \\
\cmidrule(lr){2-3}\cmidrule(lr){4-5}\cmidrule(lr){6-7}
 & \textbf{1 Epoch} & \textbf{2 Epochs} & \textbf{1 Epoch} & \textbf{2 Epochs} & \textbf{1 Epoch} & \textbf{2 Epochs} \\
\midrule
Vanilla DPO          & - & - & 75.46 & 73.33~\dec{2.1} & 71.15 & 68.36~\dec{2.8} \\
DPO-margin-1         & 54.9 & 57.6~\inc{2.7} & 75.78 & 73.53~\dec{2.3} & 71.18 & 68.66~\dec{2.5} \\
DPO-margin-gt        & 61.3 & 56.1~\dec{5.2} & 77.45 & 75.75~\dec{1.7} & 71.85 & 69.67~\dec{2.2} \\
DPO-margin-gt-scaled & 53.7 & 59.5~\inc{5.8} & 77.13 & 75.21~\dec{1.9} & 72.05 & 70.03~\dec{2.0} \\
\midrule
DPO-PoP-iter         & 57.8          & -- & 76.73          & -- & \textbf{79.97} & -- \\
DPO-PoP-random       & \textbf{62.4} & -- & \textbf{78.87} & -- & 71.09          & -- \\
\bottomrule
\end{tabular}
\end{table*}}

Training baselines for 2 epochs does not consistently improve performance. RewardBench Overall scores and test accuracy degrade for all baselines. UltraRM win rate is mixed, it decreases for DPO-margin-gt but increases slightly for DPO-margin-1 and DPO-margin-gt-scaled. Moreover, the best 2-epoch baseline UltraRM win rate (59.5\%) is lower than the best 1-epoch baseline (61.3\%), and neither reaches DPO-PoP-Random's 62.4\%. Taken together, these results demonstrate that the gains from DPO-PoP-Random are attributable to the PoP objective rather than the increased training budget.

\section{Effect of PoP data scale on performance}\label{ap:pop-data-scale}

In order to study the effect of the PoP data scale on model performance, we consider the Llama-3.2-3B model and begin with an initial subset of preferences of size \( |\mathcal{D}_{\text{pref}}| = 7500 \). We then generate a Preference-over-Preference (PoP) dataset of size \( k \cdot |\mathcal{D}_{\text{pref}}| \), where \( k \in \{1, 2, 4, 8, 16\} \). This procedure is carried out using both iterative and random sampling strategies for generating the PoP data. The baseline DPO variants are all trained on the same subset of 7500 preferences used to construct the PoP dataset.

\subsection{Discriminative Performance}
\begin{table}[h!]
\centering
\caption{Comparison of baseline DPO variants trained on a subset of preferences ($\vert \mathcal{D}_{\text{pref}}\vert = 7500$), evaluated on classification accuracy and correlation with ground-truth margins for Llama-3.2-3b.}
\label{tab:dpo-k-disc-baselines}
\begin{tabular}{lccc}
\toprule
\textbf{Algorithm} & \textbf{Pearson Correlation} & \textbf{Spearman's Correlation} & \textbf{Accuracy\%} \\
\midrule
Vanilla-DPO & 0.1450 & 0.1708 & 64.0 \\
DPO-margin-1 & 0.1374 & 0.1609 & 64.0 \\
DPO-margin-gt & 0.1855 & 0.2091 & 65.0 \\
DPO-margin-gt-scaled & 0.1441 & 0.1656 & 64.0 \\
\bottomrule
\end{tabular}
\end{table}

\begin{table}[h!]
\centering
\caption{Performance of DPO-PoP-iter for varying values of $k$, evaluated on classification accuracy and correlation with ground-truth margins for Llama-3.2-3b.}
\label{tab:dpo-k-disc-pop-iter}
\begin{tabular}{lccc}
\toprule
\textbf{Data Size Multiplier $k$} & \textbf{Pearson Correlation} & \textbf{Spearman's Correlation} & \textbf{Accuracy\%} \\
\midrule
1 & \underline{\textbf{0.2229}} & \underline{\textbf{0.2463}} & \underline{\textbf{67.0}} \\
2 & 0.2193 & 0.2429 & \underline{\textbf{67.0}} \\
4 & 0.2127 & 0.2325 & 65.0 \\
8 & 0.2183 & 0.2268 & 64.0 \\
16 & 0.2223 & 0.2236 & 63.0 \\
\bottomrule
\end{tabular}
\end{table}

\begin{table}[h!]
\centering
\caption{Performance of DPO-PoP-random for varying values of $k$, evaluated on classification accuracy and correlation with ground-truth margins for Llama-3.2-3b.}
\label{tab:dpo-k-disc-pop-random}
\begin{tabular}{lccc}
\toprule
\textbf{Data Size Multiplier $k$} & \textbf{Pearson Correlation} & \textbf{Spearman's Correlation} & \textbf{Accuracy\%} \\
\midrule
1 & 0.2386 & 0.2614 & \underline{\textbf{67.0}} \\
2 & \underline{\textbf{0.2403}} & \underline{\textbf{0.2638}} & 66.0 \\
4 & 0.2362 & 0.2556 & 66.0 \\
8 & 0.2322 & 0.2454 & 65.0 \\
16 & 0.2265 & 0.2354 & 66.0 \\
\bottomrule
\end{tabular}
\end{table}

Comparing Table~\ref{tab:dpo-k-disc-baselines} with Tables~\ref{tab:dpo-k-disc-pop-iter} and~\ref{tab:dpo-k-disc-pop-random}, we observe that the DPO-PoP variants consistently outperform the DPO baselines in terms of discriminative performance, including those baselines that have access to ground-truth margins. Furthermore, increasing the data size multiplier $k$ results in a decline in classification accuracy and correlation metrics with respect to the ground-truth margins for both DPO-PoP variants. Notably, this performance degradation is more pronounced in DPO-PoP-iter than in DPO-PoP-random. These findings suggest that, when prioritizing discriminative performance, using smaller values of $k$ (e.g., $k=1$ or $k=2$) is advisable.

\subsection{Generative Performance}

\begin{table}[h!]
\centering
\caption{Median advantage and win rate of various DPO baseline variants over Vanilla-DPO, for Llama-3.2-3b. All models are trained on a subset of preferences with $|\mathcal{D}_{\text{pref}}| = 7500$.}
\label{tab:dpo-k-gen-baselines}
\begin{tabular}{lcc}
\toprule
\textbf{Method} & \textbf{Median Advantage} & \textbf{Win Rate\%} \\
\midrule
DPO-margin-1 & 0.2500 & 56.0 \\
DPO-margin-gt & 0.4844 & 60.0 \\
DPO-margin-gt-scaled & 0.0313 & 51.0 \\
\bottomrule
\end{tabular}
\end{table}

\begin{table}[h!]
\centering
\caption{Median advantage and win rate of DPO-PoP-iter over Vanilla-DPO for different values of $k$, for Llama-3.2-3b.}
\label{tab:dpo-k-gen-pop-iter}
\begin{tabular}{lcc}
\toprule
\textbf{Data Size Multiplier $k$} & \textbf{Median Advantage} & \textbf{Win Rate\%} \\
\midrule
1 & 0.2813 & 55.0 \\
2 & 1.1250 & 68.0 \\
4 & \underline{\textbf{1.7813}} & \underline{\textbf{77.0}} \\
8 & 1.7188 & 75.0 \\
16 & 1.4629 & 69.0 \\
\bottomrule
\end{tabular}
\end{table}

\begin{table}[h!]
\centering
\caption{Median advantage and win rate of DPO-PoP-random over Vanilla-DPO for different values of $k$, for Llama-3.2-3b.}
\label{tab:dpo-k-gen-pop-random}

\begin{tabular}{lcc}
\toprule
\textbf{Data Size Multiplier $k$} & \textbf{Median Advantage} & \textbf{Win Rate\%} \\
\midrule
1 & 0.4688 & 57.0 \\
2 & 1.2500 & 71.0 \\
4 & 1.7969 & \underline{\textbf{77.0}} \\
8 & \underline{\textbf{1.8711}} & \underline{\textbf{77.0}} \\
16 & 1.5547 & 72.0 \\
\bottomrule
\end{tabular}
\end{table}

Looking at Tables~\ref{tab:dpo-k-gen-pop-iter} and~\ref{tab:dpo-k-gen-pop-random}, we observe that the win rate initially increases with the data size multiplier $k$, before eventually declining. We hypothesize that this due to the fact, at small $k$, the model has insufficient PoP supervision to learn accurate margin estimates, while at large $k$, it overfits to the margins inferred from the PoP training data. Additionally, DPO-PoP-random appears to be more robust to the choice of $k$ than DPO-PoP-iter when considering win rate. When prioritizing generative ability, a moderately larger value of $k$ (e.g., $k=4$ or $k=8$) is preferable. More importantly, when comparing with Table~\ref{tab:dpo-k-gen-baselines}, we find that in a small-data regime, DPO-PoP variants achieve substantially higher win rates than the DPO baselines—including those with access to ground-truth margins.

\section{Effect of PoP labeling noise on performance}\label{ap:pop-noise}

\subsection{Random Label Flipping}
We investigate the sensitivity of our DPO-PoP approaches to noise in PoP labels collected from annotators. Given our PoP dataset $\vert \mathcal{D}_\text{PoP} \vert$, we introduce label noise by randomly flipping PoP labels with probability $\epsilon$. We use the Llama-3.2-3b model and experiment with three different noise levels: $\epsilon \in \{0.1, 0.3, 0.5\}$. We evaluate both the discriminative and generative performance of models trained on these perturbed datasets.

\subsubsection{Discriminative Performance}

We observe from Figure~\ref{fig:pop-noise-disc} that both the Spearman and Pearson correlations for DPO-PoP-iter and DPO-PoP-random decrease as the noise level increases. Notably, this decline in correlation is more pronounced for DPO-PoP-iter compared to DPO-PoP-random. From the accuracy plot, we surprisingly find that the test classification accuracy of DPO-PoP-iter slightly increases with added noise, while it marginally decreases for DPO-PoP-random. We hypothesize that label noise induces a regularizing effect in DPO-PoP-iter, which helps mitigate its tendency to overfit to weaker preferences.

\begin{figure}[htbp]
    \centering
    \includegraphics[width=1.0\textwidth]{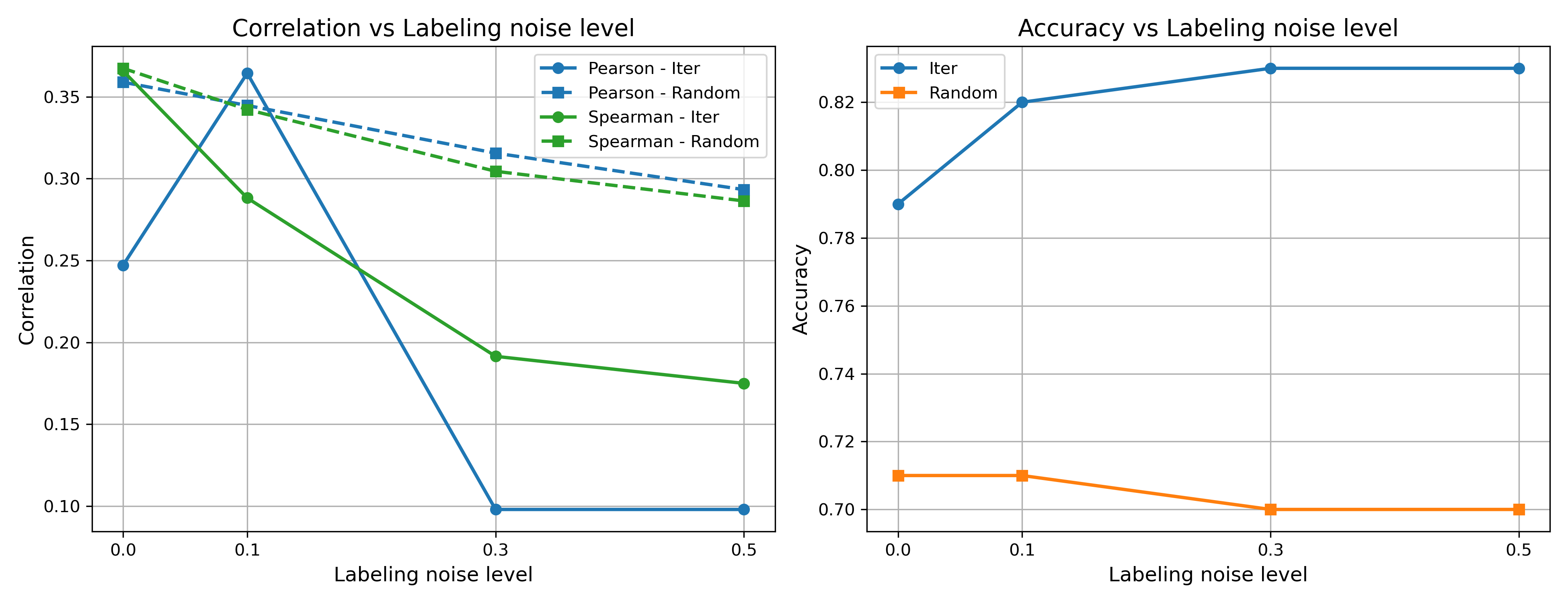}
    \caption{Spearman and Pearson correlations (left), and test classification accuracy (right) of DPO-PoP models trained with varying levels of label noise.}
    \label{fig:pop-noise-disc}
\end{figure}

\subsubsection{Generative Performance}

We observe from Figure~\ref{fig:pop-noise-gen} that both the win rate and median advantage for DPO-PoP-random decrease as the noise level increases. The win rate and median advantage for DPO-PoP-Iter also display a declining trend as noise increases.


\begin{figure}[htbp]
    \centering
    \includegraphics[width=1.0\textwidth]{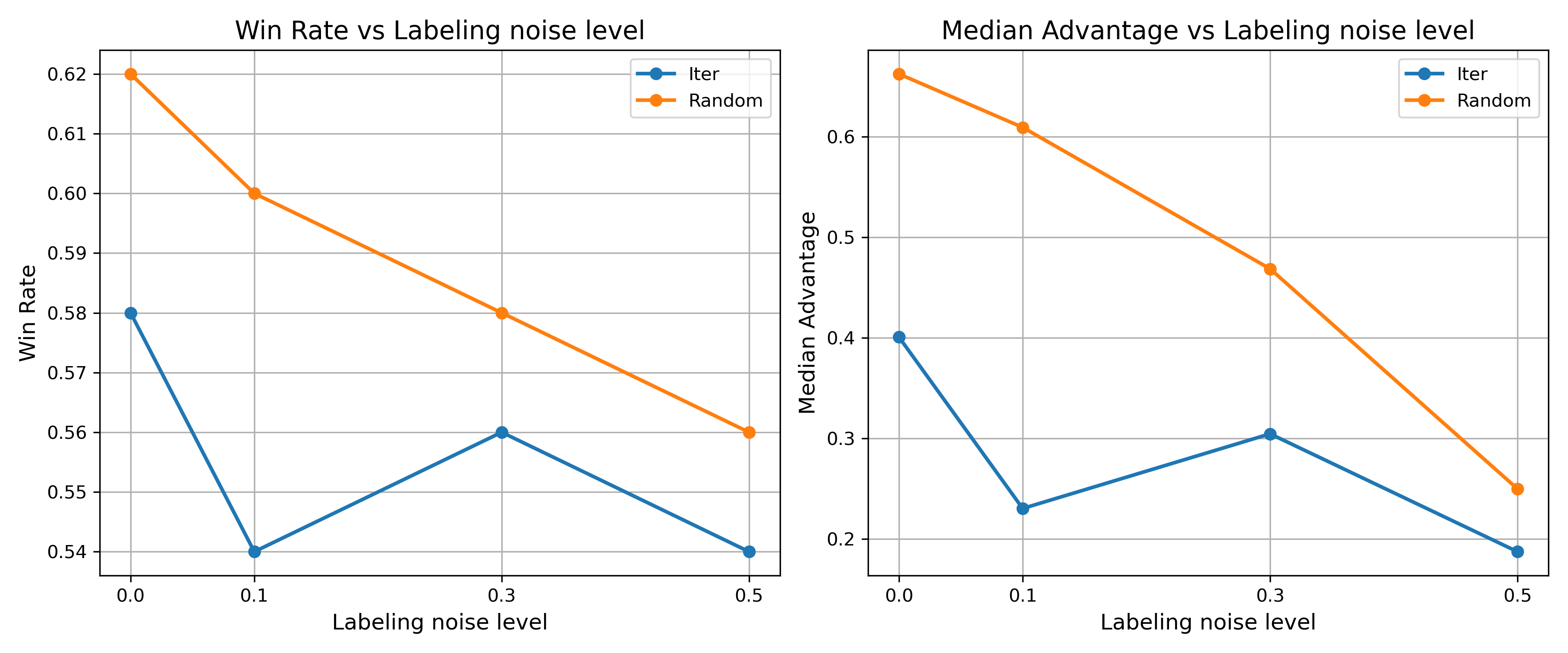}
    \caption{Win rates (left) and median advantage (right) of DPO-PoP models trained with varying levels of label noise.}
    \label{fig:pop-noise-gen}
\end{figure}

\subsection{Margin-Dependent Label Flipping}

In contrast to uniform noise, we also introduce structured label corruption calibrated to annotation difficulty. Specifically, we modulate the temperature parameter of the Bradley-Terry model to simulate annotator uncertainty: pairs with similar preference strengths, and thus smaller margin differences, are flipped with higher probability, while more discrepant preference pairs are largely unaffected. Given two preference instances $D_{s} = \{x_s, y^{+}_{s}, y^{-}_{s}\}$ and 
$D_{w} = \{x_w, y^{+}_{w}, y^{-}_{w}\}$, let $m(x, y^{+}, y^{-})$ denote a margin 
function that computes the reward gap between the preferred and dispreferred responses. 
The probability of retaining the correct PoP label for a given pair under temperature $\tau$ is
then defined following Equation~\ref{eq:bt_flip}. As $\tau$ increases, PoP labels for preference pairs with similar strengths (small margin differences) are more likely to be flipped, while those for pairs with large margin differences remain largely unaffected. This reflects a more realistic noise model, where annotator disagreement is more likely on ambiguous preference pairs. Results from  Tables ~\ref{tab:uf-test-acc-bt-noise} and ~\ref{tab:ultrarm-winrate-bt-noise} demonstrate that PoP methods are robust to annotator noise.

\begin{equation}
    P(D_{s} \succ D_{w}) = \sigma\!\left(\frac{m(x_s, y^{+}_{s}, y^{-}_{s}) - m(x_w, y^{+}_{w}, y^{-}_{w})}{\tau}\right)
    \label{eq:bt_flip}
\end{equation}

\subsubsection{Discriminative Performance}

Table~\ref{tab:uf-test-acc-bt-noise} reports test classification accuracy across temperatures. Both \texttt{DPO-PoP-Iter} and \texttt{DPO-PoP-Random} remain stable or show a slight upward trend with increasing temperature. Rather than degrading, performance is maintained or marginally improved, suggesting that flipping weaker PoP labels more frequently introduces a regularizing effect that benefits classification.

\begin{table}[h]
\centering
\caption{Test Classification Accuracy under Margin-Dependent Label Noise.}
\label{tab:uf-test-acc-bt-noise}
\begin{tabular}{lccccc}
\toprule
\textbf{Method} & \textbf{Temp-1} & \textbf{Temp-2} & \textbf{Temp-4} & \textbf{Temp-8} & \textbf{Temp-16} \\
\midrule
DPO-PoP-Iter   & 80.60 & 82.22 & 83.43 & 83.27 & 84.28 \\
DPO-PoP-Random & 67.20 & 68.41 & 69.27 & 68.26 & 69.52 \\
\bottomrule
\end{tabular}
\end{table}

\subsubsection{Generative Performance}

Table~\ref{tab:ultrarm-winrate-bt-noise} reports UltraRM win rates against Vanilla DPO. \texttt{DPO-PoP-Iter} remains stable with increasing temperatures, while \texttt{DPO-PoP-Random} trends slightly upward at higher temperatures, consistent with the regularizing effect.

\begin{table}[h]
\centering
\caption{UltraRM Win Rates under Margin-Dependent Label Noise.}
\label{tab:ultrarm-winrate-bt-noise}
\begin{tabular}{lccccc}
\toprule
\textbf{Method} & \textbf{Temp-1} & \textbf{Temp-2} & \textbf{Temp-4} & \textbf{Temp-8} & \textbf{Temp-16} \\
\midrule
DPO-PoP-Iter   & 0.56 & 0.54 & 0.56 & 0.55 & 0.56 \\
DPO-PoP-Random & 0.56 & 0.56 & 0.56 & 0.57 & 0.59 \\
\bottomrule
\end{tabular}
\end{table}

As we increase the temperature, both discriminative and generative performance of DPO-PoP variants remain stable or marginally improve, demonstrating robustness to margin-dependent annotation noise.

\section{Evolution of Metrics over Training}

In this section, we present the evolution of test classification accuracy, the KL divergence with respect to the reference policy, and the Ultra-RM win rate over the course of training, in Figure ~\ref{fig:evolution-plots}. Note that for the PoP methods, because we use $k=2$, the effective training budget is doubled; this is due to the training dataset being twice the size of the original preference dataset. The plots are averaged over 5 seeds. We point that for the KL and test classification accuracy plots, the confidence intervals are very small, which is why they are not visible in the plots.

\begin{figure}[t]
    \centering

    \begin{subfigure}{0.48\textwidth}
        \centering
        \includegraphics[width=\textwidth]{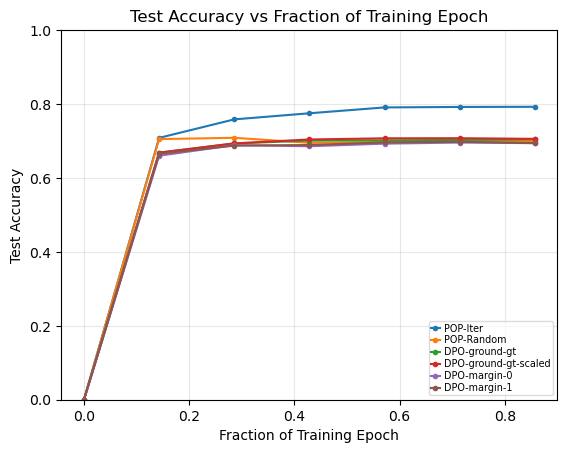}
        \caption{Test Accuracy}
    \end{subfigure}
    \hfill
    \begin{subfigure}{0.48\textwidth}
        \centering
        \includegraphics[width=\textwidth]{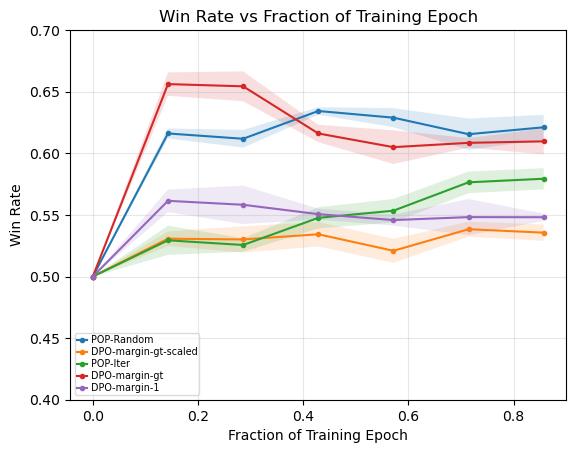}
        \caption{UltraRM-Winrate}
    \end{subfigure}

    \vspace{0.5em}

    \begin{subfigure}{0.48\textwidth}
        \centering
        \includegraphics[width=\textwidth]{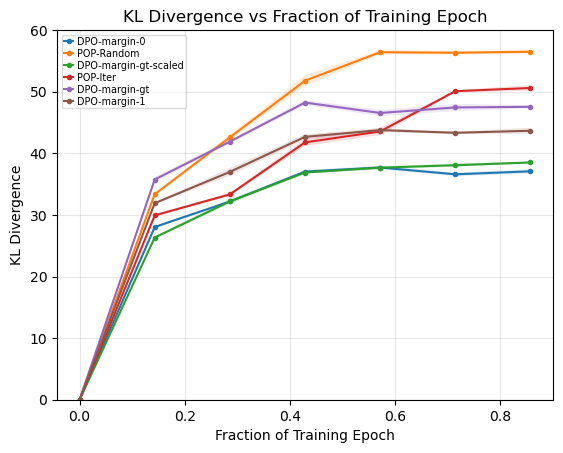}
        \caption{KL Divergence}
    \end{subfigure}

    \caption{Training curves for test classification accuracy, UltraRM-winrate, and KL with respect to the reference policy.}
    \label{fig:evolution-plots}
\end{figure}



\section{Bounds on the Generalization performance of Adaptive margin classifiers }
\label{ap:theorem}
Here, we analyze the generalization performance of adaptive margin classifiers from a theoretical perspective. We restrict ourselves to reward model inference from preferences. Furthermore, we assume linear reward functions. The reward difference between chosen and rejected responses in a preference pair $(x,y^+,y^-)$ can be expressed as $g_{w}(\psi) = r(x,y^+)-r(x,y^-) = w^T\psi(x,y^+,y^-)$. 

\subsection{Setting}

Let $(\Psi,M)$ be a random pair with distribution $\mathcal{D}$, where
\[
  \Psi \in \mathbb{R}^d,\qquad M \in (0,\infty).
\]

Here $\Psi$ and $M$ are random variables corresponding to feature differences and margins respectively. 
We observe an i.i.d.\ sample
\[
  S = \{(\psi_i,m_i)\}_{i=1}^n \sim \mathcal{D}^n.
\]
Assume
\begin{equation}
  \|\psi_i\|_2 \le R \quad \text{for all } i=1,\dots,n,
  \label{eq:radius-assumption}
\end{equation}
for some $R>0$.
We consider linear predictors $w \in \mathbb{R}^d$ with
\begin{equation}
  \|w\|_2 \le \Lambda,
  \label{eq:norm-assumption}
\end{equation}
for some $\Lambda>0$.
For $w$ and a data point $(\psi,m)$ we define the score
\[
  g_w(\psi) := w^\top \psi.
\]

The \emph{test misclassification error} of $w$ (with no access to $M$ at test time) is
\begin{equation}
  L(w) := \Pr_{(\Psi,M)\sim\mathcal{D}}\!\bigl( g_w(\Psi) \le 0 \bigr).
  \label{eq:test-error}
\end{equation}

For each training point $i$, define
\[
  g_i(w) := g_w(\psi_i) = w^\top \psi_i.
\]

\paragraph{Adaptive-margin logistic loss.}
Given a per-example margin $m_i>0$, define the \emph{shifted logistic loss}
\begin{equation}
  \ell_i(w) := \log\bigl(1 + \exp\bigl( -(g_i(w) - m_i) \bigr)\bigr).
  \label{eq:logistic-loss}
\end{equation}
The empirical adaptive-margin logistic loss is
\begin{equation}
  \hat{L}_{\log}(w)
  := \frac{1}{n}\sum_{i=1}^n \ell_i(w)
  = \frac{1}{n}\sum_{i=1}^n
  \log\bigl(1 + \exp\bigl( -(w^\top\psi_i - m_i) \bigr)\bigr).
  \label{eq:empirical-logistic-loss}
\end{equation}

\paragraph{Ramp loss with per-example margin.}
For $m>0$ define the (margin-$m$) ramp loss
\begin{equation}
  \Phi_m(u) :=
  \begin{cases}
    1, & u \le 0,\\[4pt]
    1 - \dfrac{u}{m}, & 0 < u < m,\\[4pt]
    0, & u \ge m.
  \end{cases}
  \label{eq:ramp-loss}
\end{equation}
Note that $0 \le \Phi_m(u) \le 1$ for all $u$ and $m$, and that
\begin{equation}
  \mathbf{1}\{u \le 0\} \;\le\; \Phi_m(u)
  \quad \text{for all } u\in\mathbb{R}, \; m>0.
  \label{eq:01-vs-ramp}
\end{equation}

\subsection{Main Theorem}

We now state the desired generalization bound, in which the empirical term is exactly (up to a universal constant factor) the empirical adaptive-margin logistic loss~\eqref{eq:empirical-logistic-loss}.
\\
\begin{theorem}[Adaptive-margin logistic generalization bound]
\label{thm:main}
Assume~\eqref{eq:radius-assumption} and~\eqref{eq:norm-assumption}, and let $\delta \in (0,1)$.
Then with probability at least $1-\delta$ over the sample $S\sim\mathcal{D}^n$, we have simultaneously for all $w$ with $\|w\|_2 \le \Lambda$,
\begin{equation}
  \Pr_{(\Psi,M)\sim\mathcal{D}}\!\bigl( w^\top \Psi \le 0 \bigr)
  \;\le\;
  \frac{1}{\log 2}\,\hat{L}_{\log}(w)
  \;+\;
  \frac{2\Lambda R}{n}\,\sqrt{\sum_{i=1}^n \frac{1}{m_i^2}}
  \;+\;
  \sqrt{\frac{2\log(2/\delta)}{n}}.
  \label{eq:main-bound}
\end{equation}
In particular, the left-hand side depends only on the test score $w^\top \Psi$ and does not require access to $M$ at test time; the adaptive margins $m_i$ appear only in the empirical loss and in the margin-distribution complexity term.
\end{theorem}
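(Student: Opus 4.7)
The plan is to sandwich the misclassification probability between a population random-margin ramp loss and the empirical adaptive-margin logistic loss, with a uniform Rademacher-complexity bound on the ramp-loss class providing the bridge. The two pointwise inequalities at the ends are essentially bookkeeping: $\mathbf{1}\{u\le 0\}\le \Phi_m(u)$ from~\eqref{eq:01-vs-ramp} converts $L(w)$ into $\mathbb{E}[\Phi_M(w^\top\Psi)]$, while the surrogate inequality $\Phi_m(u)\le \tfrac{1}{\log 2}\log(1+\exp(-(u-m)))$ (easily verified by splitting into the three cases $u\le 0$, $0<u<m$, $u\ge m$; the interesting regime reduces to $2^{-v/m}\le 1+e^{-v}$ for $v=u-m\in(-m,0)$, where the left side stays below $2$ and the right side above $2$) converts the empirical ramp into $\tfrac{1}{\log 2}\hat{L}_{\log}(w)$. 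All the nontrivial probabilistic content lives in the uniform concentration of the ramp loss.

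For that middle step, I would fix the loss class
\[
\mathcal{H}=\{(\psi,m)\mapsto \Phi_m(w^\top\psi)\,:\,\|w\|_2\le\Lambda\}\subseteq[0,1]^{\mathbb{R}^d\times(0,\infty)},
\]
and invoke the textbook symmetrization plus McDiarmid bound for $[0,1]$-valued losses to obtain, with probability at least $1-\delta$ over $S$ and simultaneously for all $w$ with $\|w\|_2\le\Lambda$,
\[
\mathbb{E}[\Phi_M(w^\top\Psi)]\le \frac{1}{n}\sum_{i=1}^n \Phi_{m_i}(w^\top\psi_i) + 2\hat{\mathfrak{R}}_n(\mathcal{H}) + \sqrt{\tfrac{2\log(2/\delta)}{n}},
\]
where $\hat{\mathfrak{R}}_n(\mathcal{H})$ is the empirical Rademacher complexity conditioned on the sample $\{(\psi_i,m_i)\}$. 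The key step is then a sharp evaluation of $\hat{\mathfrak{R}}_n(\mathcal{H})$. The crucial observation is the scale-invariance $\Phi_m(u)=\Phi_1(u/m)$, which lets me introduce rescaled features $z_i:=\psi_i/m_i$ and rewrite the Rademacher average as $\mathbb{E}_\sigma\sup_{\|w\|\le\Lambda}\tfrac{1}{n}\sum_i \sigma_i\Phi_1(w^\top z_i)$. Because $\Phi_1$ is $1$-Lipschitz (and the additive constant $\Phi_1(0)$ is absorbed by $\mathbb{E}_\sigma[\sigma_i]=0$), Ledoux-Talagrand contraction gives $\hat{\mathfrak{R}}_n(\mathcal{H})\le \tfrac{1}{n}\mathbb{E}_\sigma\sup_{\|w\|\le\Lambda}\langle w,\sum_i\sigma_i z_i\rangle$. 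Cauchy-Schwarz on the inner product, Jensen's inequality $\mathbb{E}_\sigma\|\sum_i\sigma_i z_i\|_2\le\sqrt{\sum_i\|z_i\|_2^2}$, and the radius bound $\|\psi_i\|\le R$ from~\eqref{eq:radius-assumption} combine to give $\hat{\mathfrak{R}}_n(\mathcal{H})\le \tfrac{\Lambda R}{n}\sqrt{\sum_i 1/m_i^2}$. Multiplying by the symmetrization factor $2$ reproduces the middle term of~\eqref{eq:main-bound}.

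Chaining the three inequalities in order (pointwise zero-one to ramp, uniform concentration of ramp, pointwise ramp to logistic) and summing the logistic bound over $i$ to recognize $\tfrac{1}{\log 2}\hat{L}_{\log}(w)$ delivers~\eqref{eq:main-bound}. The main obstacle is the Rademacher step with per-example Lipschitz constants $1/m_i$: a naive application of contraction with the worst-case Lipschitz $1/\min_i m_i$ would inflate the complexity to $\Lambda R/\sqrt{n}\cdot(1/\min_i m_i)$, strictly worse than the sharp margin-distribution quantity $\sqrt{\sum_i 1/m_i^2}/n$. The identity $\Phi_m(u)=\Phi_1(u/m)$ is what lets the heterogeneous margins be folded into a rescaling of the features, after which the $\ell_2$ norm of the rescaled Rademacher sum produces exactly the desired term. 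The remaining technical care is in the constants of the tail, which can be matched by splitting $\delta$ across the McDiarmid deviation and the bound on the Rademacher average.
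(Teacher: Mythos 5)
Your proposal is correct, and the overall chain is the same as the paper's: bound the 0--1 loss by the population ramp loss via $\mathbf{1}\{u\le 0\}\le\Phi_m(u)$, apply the standard uniform Rademacher bound for $[0,1]$-valued losses, control the complexity of the ramp class, and finish with the pointwise surrogate inequality $\Phi_m(u)\le\frac{1}{\log 2}\log(1+e^{-(u-m)})$ (your three-case verification matches Lemma~\ref{lem:ramp-vs-log} in substance). The one place where you genuinely diverge is the step you correctly identify as the crux: handling the heterogeneous Lipschitz constants $1/m_i$ in the contraction. The paper invokes a per-example variant of the Ledoux--Talagrand contraction principle (Lemma~\ref{lem:per-example-contraction}), applying it directly to $f_i(u)=\Phi_{m_i}(u)-\Phi_{m_i}(0)$ with $L_i=1/m_i$. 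You instead exploit the scale-invariance $\Phi_m(u)=\Phi_1(u/m)$ to fold each margin into a rescaled feature $z_i=\psi_i/m_i$, after which a single $1$-Lipschitz function $\Phi_1$ and the ordinary (homogeneous) contraction lemma suffice; Cauchy--Schwarz and Jensen then yield the identical quantity $\frac{\Lambda R}{n}\sqrt{\sum_i 1/m_i^2}$. Your device is arguably more elementary, since it derives the needed per-example contraction from the textbook version in this special case where the arguments are linear in $w$ and the Lipschitz constants can be absorbed into the data; the paper's route is more general in that it would apply even if the per-example scaling could not be pushed into the features. Your closing remark about splitting $\delta$ is unnecessary --- the $\sqrt{2\log(2/\delta)/n}$ tail already comes packaged with the empirical-Rademacher version of the uniform bound (Lemma~\ref{lem:rad-ub}) --- but this does not affect correctness.
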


The rest of this note is devoted to the proof.

\subsection{From 0--1 loss to ramp loss}

We first express the test error~\eqref{eq:test-error} in terms of the ramp loss~\eqref{eq:ramp-loss}.

\begin{lemma}
\label{lem:01-to-ramp}
For any $w\in\mathbb{R}^d$,
\begin{equation}
  L(w)
  \;=\;
  \mathbb{E}_{(\Psi,M)\sim\mathcal{D}}
  \bigl[\mathbf{1}\{w^\top\Psi \le 0\}\bigr]
  \;\le\;
  \mathbb{E}_{(\Psi,M)\sim\mathcal{D}}
  \bigl[\Phi_M(w^\top\Psi)\bigr].
  \label{eq:01-to-ramp}
\end{equation}
\end{lemma}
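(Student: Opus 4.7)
The plan is to reduce the inequality to the pointwise bound~\eqref{eq:01-vs-ramp} already recorded in the setup, then integrate against $\mathcal{D}$. Concretely, for any realization $(\Psi, M) = (\psi, m)$ with $m > 0$, I would substitute $u = w^\top \psi$ into the inequality $\mathbf{1}\{u \le 0\} \le \Phi_m(u)$, noting that $\Phi_m$ is well-defined precisely because $M > 0$ almost surely by assumption. This immediately gives, pointwise,
\[
  \mathbf{1}\{w^\top \Psi \le 0\} \;\le\; \Phi_M(w^\top \Psi).
\]

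Next, I would take expectations of both sides with respect to $(\Psi, M) \sim \mathcal{D}$. By monotonicity of expectation,
\[
  \mathbb{E}_{(\Psi, M)\sim \mathcal{D}}\!\bigl[\mathbf{1}\{w^\top \Psi \le 0\}\bigr]
  \;\le\;
  \mathbb{E}_{(\Psi, M)\sim \mathcal{D}}\!\bigl[\Phi_M(w^\top \Psi)\bigr],
\]
and the left-hand side equals $L(w)$ by definition~\eqref{eq:test-error}, which establishes~\eqref{eq:01-to-ramp}.

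Because both functions involved are bounded in $[0,1]$ and $M$ is strictly positive by assumption, no integrability subtleties arise and no further justification of expectation exchange is needed. There is essentially no obstacle here: the entire argument is a pointwise comparison followed by taking expectations. The role of this lemma in the broader proof of Theorem~\ref{thm:main} is to pass from the non-smooth $0$--$1$ loss on the test side to a Lipschitz surrogate $\Phi_M$ whose Lipschitz constant depends on $M$; that dependence is what will ultimately produce the $\sqrt{\sum_i 1/m_i^2}$ Rademacher term in~\eqref{eq:main-bound} when this lemma is combined with a standard contraction/Rademacher argument on the empirical side.
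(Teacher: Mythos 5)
Your proof is correct and is essentially identical to the paper's: both apply the pointwise inequality~\eqref{eq:01-vs-ramp} with $u = w^\top\psi$ and $m = M$, take expectations over $(\Psi,M)\sim\mathcal{D}$, and identify the left-hand side with $L(w)$ via~\eqref{eq:test-error}. Your additional remarks on boundedness and the positivity of $M$ are harmless elaborations of the same argument.
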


\begin{proof}
For any fixed $(\psi,m)$ and $w$ we have~\eqref{eq:01-vs-ramp}:
\[
  \mathbf{1}\{w^\top\psi \le 0\} \;\le\; \Phi_m(w^\top\psi).
\]
Taking expectation over $(\Psi,M)\sim\mathcal{D}$ yields
\[
  \mathbb{E}\bigl[\mathbf{1}\{w^\top\Psi \le 0\}\bigr]
  \;\le\;
  \mathbb{E}\bigl[\Phi_M(w^\top\Psi)\bigr].
\]
The left-hand side is $L(w)$ by~\eqref{eq:test-error}, giving~\eqref{eq:01-to-ramp}.
\end{proof}

Thus it suffices to obtain a uniform upper bound on
\[
  \mathbb{E}\bigl[\Phi_M(w^\top\Psi)\bigr]
\]
in terms of the empirical ramp loss
\[
  \frac{1}{n}\sum_{i=1}^n \Phi_{m_i}\bigl(g_i(w)\bigr)
\]
and a complexity term.

\subsection{Uniform bound for the ramp loss}

Define the function class
\[
  \mathcal{H}
  :=
  \bigl\{
    h_w : (\psi,m)\mapsto \Phi_m(w^\top\psi)
    \,\big|\,
    \|w\|_2 \le \Lambda
  \bigr\}.
\]
Each $h_w$ maps into $[0,1]$.
We use the standard Rademacher-complexity generalization bound for bounded losses.

\begin{lemma}[Uniform deviation for bounded losses]
\label{lem:rad-ub}
Let $\mathcal{H} \subseteq [0,1]^{\mathcal{Z}}$, and let $Z_1,\dots,Z_n$ be i.i.d.\ from some distribution on $\mathcal{Z}$.
Let
\[
  \widehat{\mathfrak{R}}_n(\mathcal{H})
  :=
  \mathbb{E}_{\sigma}
  \left[
    \sup_{h\in\mathcal{H}}
    \frac{1}{n}\sum_{i=1}^n \sigma_i h(Z_i)
  \right],
\]
where $\sigma_i$ are i.i.d.\ Rademacher random variables (\(\Pr(\sigma_i=1)=\Pr(\sigma_i=-1)=1/2\)).
Then for any $\delta \in (0,1)$, with probability at least $1-\delta$ over the draw of $(Z_1,\dots,Z_n)$,
\begin{equation}
  \forall h\in\mathcal{H}:\quad
  \mathbb{E}[h(Z)]
  \;\le\;
  \frac{1}{n}\sum_{i=1}^n h(Z_i)
  \;+\;
  2\,\widehat{\mathfrak{R}}_n(\mathcal{H})
  \;+\;
  \sqrt{\frac{2\log(2/\delta)}{n}}.
  \label{eq:rad-ub}
\end{equation}
\end{lemma}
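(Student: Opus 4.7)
The plan is to follow the classical Rademacher-complexity route: (i) apply McDiarmid's bounded-differences inequality to concentrate the uniform deviation around its expectation, (ii) symmetrize to bound that expectation by twice the population Rademacher complexity, (iii) apply McDiarmid a second time to replace the population Rademacher complexity by its empirical version, and (iv) combine via a union bound, splitting the failure budget as $\delta/2 + \delta/2$.

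Concretely, I introduce $\Phi(S) := \sup_{h\in\mathcal{H}} \bigl(\mathbb{E}[h(Z)] - \tfrac{1}{n}\sum_{i=1}^n h(Z_i)\bigr)$, so that proving a high-probability upper bound on $\Phi(S)$ immediately yields the desired uniform bound. Because every $h$ takes values in $[0,1]$, replacing any single coordinate $Z_i$ by an independent copy perturbs the empirical mean by at most $1/n$, and hence perturbs the supremum by at most $1/n$. McDiarmid therefore gives, with probability at least $1-\delta/2$, $\Phi(S) \le \mathbb{E}[\Phi(S)] + \sqrt{\log(2/\delta)/(2n)}$.

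For the expectation, I bring in an i.i.d.\ ghost sample $S' = (Z_1',\dots,Z_n')$ drawn from the same distribution and use Jensen's inequality to push the expectation defining $\mathbb{E}[h(Z)]$ inside the supremum, obtaining $\mathbb{E}[\Phi(S)] \le \mathbb{E}_{S,S'}\bigl[\sup_{h} \tfrac{1}{n}\sum_i (h(Z_i') - h(Z_i))\bigr]$. Since each pair $(Z_i,Z_i')$ is exchangeable, multiplying the difference by an i.i.d.\ Rademacher sign $\sigma_i$ leaves the joint distribution unchanged, and a triangle-inequality split of $\sigma_i h(Z_i') - \sigma_i h(Z_i)$ yields $\mathbb{E}[\Phi(S)] \le 2\mathfrak{R}_n(\mathcal{H})$, where $\mathfrak{R}_n := \mathbb{E}[\widehat{\mathfrak{R}}_n]$. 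To swap $\mathfrak{R}_n$ for $\widehat{\mathfrak{R}}_n$, I note that $S\mapsto \widehat{\mathfrak{R}}_n(\mathcal{H})$ also has bounded differences with constant $1/n$ (the inner supremum is over $[0,1]$-valued functions), so a second McDiarmid application gives, with probability at least $1-\delta/2$, $\mathfrak{R}_n(\mathcal{H}) \le \widehat{\mathfrak{R}}_n(\mathcal{H}) + \sqrt{\log(2/\delta)/(2n)}$. Summing the two square-root terms produces $2\sqrt{\log(2/\delta)/(2n)} = \sqrt{2\log(2/\delta)/n}$, and since $\mathbb{E}[h(Z)] - \tfrac{1}{n}\sum_i h(Z_i) \le \Phi(S)$ simultaneously for all $h \in \mathcal{H}$, the bound \eqref{eq:rad-ub} follows after a union bound.

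The main obstacle is the symmetrization step: introducing the ghost sample, justifying the Jensen move, and using exchangeability with Rademacher signs correctly are the only genuinely non-mechanical parts of the argument. Everything else is careful bookkeeping of the two $1/n$ bounded-differences constants and of the $\delta/2$ budget split, which together are what produce the precise constants $2$ in front of $\widehat{\mathfrak{R}}_n(\mathcal{H})$ and $\log(2/\delta)$ inside the square root.
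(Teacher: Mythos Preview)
Your argument is the standard one the paper defers to by citing Theorem~6 of Bousquet, Boucheron, and Lugosi rather than supplying its own proof, and it is correct in structure. One bookkeeping slip: when you replace $2\mathfrak{R}_n(\mathcal{H})$ by $2\widehat{\mathfrak{R}}_n(\mathcal{H})$, the second McDiarmid deviation inherits the same factor of $2$, so the combined square-root term is $3\sqrt{\log(2/\delta)/(2n)}$ rather than the $2\sqrt{\log(2/\delta)/(2n)}=\sqrt{2\log(2/\delta)/n}$ you wrote; this yields a slightly looser constant than the lemma states (and in fact matches the constant in most textbook versions of this bound), but it does not affect how the lemma is used downstream.
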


For proof, refer to Theorem~6 in ~\cite{Bousquet2004IntroductionTS}.

We apply Lemma~\ref{lem:rad-ub} to $\mathcal{H}$ with $Z_i=(\psi_i,m_i)$ and $h_w(Z)=\Phi_m(w^\top\psi)$.
Then with probability at least $1-\delta$ over $S\sim\mathcal{D}^n$, we have simultaneously for all $\|w\|\le\Lambda$,
\begin{equation}
  \mathbb{E}_{(\Psi,M)\sim\mathcal{D}}
  \bigl[\Phi_M(w^\top\Psi)\bigr]
  \;\le\;
  \frac{1}{n}\sum_{i=1}^n \Phi_{m_i}\bigl(g_i(w)\bigr)
  \;+\;
  2\,\widehat{\mathfrak{R}}_n(\mathcal{H})
  \;+\;
  \sqrt{\frac{2\log(2/\delta)}{n}}.
  \label{eq:exp-ramp-ub}
\end{equation}

It remains to bound $\widehat{\mathfrak{R}}_n(\mathcal{H})$ using the Lipschitz properties of $\Phi_m$.

\subsection{Lipschitz contraction with per-example constants}

For each $m>0$, the function $u\mapsto \Phi_m(u)$ is $1/m$-Lipschitz:
\begin{equation}
  \forall u,v\in\mathbb{R}:\quad
  \bigl|\Phi_m(u)-\Phi_m(v)\bigr|
  \;\le\; \frac{1}{m}\,|u-v|.
  \label{eq:phi-lipschitz}
\end{equation}
We use a per-example contraction inequality (a variant of the Ledoux--Talagrand contraction principle).

\begin{lemma}[Per-example contraction]
\label{lem:per-example-contraction}
Let $f_i:\mathbb{R}\to\mathbb{R}$ satisfy $f_i(0)=0$ and be $L_i$-Lipschitz for $i=1,\dots,n$.
Let $a_i:\mathcal{W}\to\mathbb{R}$ be arbitrary functions, and let $\sigma_i$ be i.i.d.\ Rademacher random variables.
Then
\begin{equation}
  \mathbb{E}_\sigma
  \biggl[
    \sup_{w\in\mathcal{W}}
    \sum_{i=1}^n \sigma_i f_i\bigl(a_i(w)\bigr)
  \biggr]
  \;\le\;
  \mathbb{E}_\sigma
  \biggl[
    \sup_{w\in\mathcal{W}}
    \sum_{i=1}^n L_i \sigma_i a_i(w)
  \biggr].
  \label{eq:per-example-contraction}
\end{equation}
\end{lemma}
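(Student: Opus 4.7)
The plan is to prove the per-example contraction by induction on the coordinate being contracted, peeling off one Rademacher variable at a time via the Lipschitz property of the corresponding $f_i$. The core ingredient is the following one-step claim: for any function $\phi : \mathcal{W} \to \mathbb{R}$, any $L$-Lipschitz $f : \mathbb{R} \to \mathbb{R}$, any $a: \mathcal{W} \to \mathbb{R}$, and a single Rademacher variable $\sigma$,
\[
\mathbb{E}_\sigma\!\left[\sup_{w\in\mathcal{W}}\bigl(\phi(w) + \sigma\, f(a(w))\bigr)\right] \;\le\; \mathbb{E}_\sigma\!\left[\sup_{w\in\mathcal{W}}\bigl(\phi(w) + L\,\sigma\, a(w)\bigr)\right].
\]
Given this one-step claim, the full lemma follows by conditioning on all Rademacher variables except $\sigma_k$ and applying it for $k = n, n-1, \dots, 1$. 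At step $k$, the role of $\phi(w)$ is played by $\sum_{i<k}\sigma_i f_i(a_i(w)) + \sum_{i>k} L_i \sigma_i a_i(w)$, which is deterministic given the conditioning, so the one-step claim applies and replaces the $k$-th summand $\sigma_k f_k(a_k(w))$ by $L_k \sigma_k a_k(w)$. Iterating and using the tower property of conditional expectation yields the claimed bound.

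To establish the one-step claim, I expand the expectation over $\sigma$ explicitly and pool the two resulting suprema into a joint supremum over $(w,w')$:
\[
\tfrac12\sup_{w}\bigl[\phi(w)+f(a(w))\bigr] + \tfrac12\sup_{w'}\bigl[\phi(w')-f(a(w'))\bigr] \;=\; \tfrac12\sup_{w,w'}\bigl[\phi(w)+\phi(w')+f(a(w))-f(a(w'))\bigr].
\]
The $L$-Lipschitz property of $f$ gives $f(a(w)) - f(a(w')) \le L\,|a(w)-a(w')|$. Because the objective is symmetric under swapping $(w,w')$, the supremum is unaffected if I replace the absolute value by the signed difference $a(w)-a(w')$. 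Splitting the resulting joint supremum back into two separate suprema and repackaging as an expectation over $\sigma$ yields the desired inequality with $L\,\sigma\,a(w)$ in place of $\sigma\,f(a(w))$.

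The main obstacle, such as it is, lies in the bookkeeping rather than in any genuine difficulty: this is essentially the classical Ledoux--Talagrand contraction argument adapted to allow a distinct Lipschitz constant at each index. The only care needed is to verify that the ``already-contracted'' linear terms and the ``not-yet-contracted'' logistic-style terms together form a legitimate deterministic $\phi(w)$ after conditioning at each induction step, and to invoke the tower property correctly when taking the outer expectation. Notably, the assumption $f_i(0)=0$ plays no role in the Lipschitz swap itself; it is a harmless normalization, included to keep the one-step factor exactly $L$ (rather than $2L$ as would arise in some alternative formulations of the contraction principle).
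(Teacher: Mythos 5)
Your proposal is correct. The paper does not actually prove this lemma---it only cites the Rademacher-complexity chapter of Shalev-Shwartz and Ben-David and the Ledoux--Talagrand contraction principle---and your argument is exactly the standard proof from those references (expand the expectation over one Rademacher variable, pool into a joint supremum over $(w,w')$, apply the Lipschitz bound, and remove the absolute value by the swap symmetry), correctly adapted so that each coordinate carries its own constant $L_i$. Your side remark that $f_i(0)=0$ is not needed for the inequality itself is also right, and is consistent with how the paper uses the lemma: it subtracts $\Phi_{m_i}(0)$ before applying the contraction only because that centering term has zero mean and so costs nothing.
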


For proof of the Contraction lemma, refer to the chapter on Rademacher complexity in \cite{shalev2014understanding}, or the Contraction principle in \cite{Ledoux1991ProbabilityIB}.

We now bound $\widehat{\mathfrak{R}}_n(\mathcal{H})$. By definition,
\begin{align}
  \widehat{\mathfrak{R}}_n(\mathcal{H})
  &=
  \mathbb{E}_{\sigma}
  \biggl[
    \sup_{\|w\|\le\Lambda}
    \frac{1}{n}\sum_{i=1}^n \sigma_i \Phi_{m_i}\bigl(w^\top\psi_i\bigr)
  \biggr]
  \nonumber\\
  &=
  \mathbb{E}_{\sigma}
  \biggl[
    \sup_{\|w\|\le\Lambda}
    \frac{1}{n}\sum_{i=1}^n \sigma_i
    \Bigl(
      \Phi_{m_i}\bigl(w^\top\psi_i\bigr) - \Phi_{m_i}(0)
    \Bigr)
  \biggr],
  \label{eq:rad-h-start}
\end{align}
since $\sum_{i=1}^n \sigma_i \Phi_{m_i}(0)$ does not depend on $w$ and has mean zero over $\sigma$.
Define
\[
  f_i(u) := \Phi_{m_i}(u)-\Phi_{m_i}(0),\qquad
  a_i(w) := w^\top\psi_i.
\]
Then $f_i(0)=0$, and by~\eqref{eq:phi-lipschitz}, $f_i$ is $L_i$-Lipschitz with $L_i = 1/m_i$.

Applying Lemma~\ref{lem:per-example-contraction} to~\eqref{eq:rad-h-start}, we obtain
\begin{align}
  \widehat{\mathfrak{R}}_n(\mathcal{H})
  &\le
  \mathbb{E}_\sigma
  \biggl[
    \sup_{\|w\|\le\Lambda}
    \frac{1}{n}\sum_{i=1}^n \frac{\sigma_i}{m_i}\,w^\top\psi_i
  \biggr]
  \nonumber\\
  &=
  \frac{1}{n}\,\mathbb{E}_\sigma
  \biggl[
    \sup_{\|w\|\le\Lambda}
    w^\top \biggl( \sum_{i=1}^n \frac{\sigma_i}{m_i}\,\psi_i \biggr)
  \biggr].
  \label{eq:rad-h-mid}
\end{align}
By Cauchy--Schwarz and the constraint $\|w\|\le\Lambda$,
\[
  \sup_{\|w\|\le\Lambda} w^\top v = \Lambda \|v\|_2,
\]
so
\begin{equation}
  \widehat{\mathfrak{R}}_n(\mathcal{H})
  \;\le\;
  \frac{\Lambda}{n}\,\mathbb{E}_\sigma
  \biggl[
    \biggl\|
      \sum_{i=1}^n \frac{\sigma_i}{m_i}\,\psi_i
    \biggr\|_2
  \biggr].
  \label{eq:rad-h-jensen}
\end{equation}
By Jensen's inequality,
\[
  \mathbb{E}_\sigma
  \biggl[
    \biggl\|
      \sum_{i=1}^n \frac{\sigma_i}{m_i}\,\psi_i
    \biggr\|_2
  \biggr]
  \;\le\;
  \sqrt{
    \mathbb{E}_\sigma
    \biggl[
      \biggl\|
        \sum_{i=1}^n \frac{\sigma_i}{m_i}\,\psi_i
      \biggr\|_2^2
    \biggr]
  }.
\]
Expanding the square and using $\mathbb{E}_\sigma[\sigma_i\sigma_j] = 0$ for $i\neq j$, $\mathbb{E}_\sigma[\sigma_i^2]=1$, we get
\begin{align*}
  \mathbb{E}_\sigma
  \biggl[
    \biggl\|
      \sum_{i=1}^n \frac{\sigma_i}{m_i}\,\psi_i
    \biggr\|_2^2
  \biggr]
  &=
  \sum_{i=1}^n \frac{1}{m_i^2}\,\|\psi_i\|_2^2
  \;\le\;
  R^2 \sum_{i=1}^n \frac{1}{m_i^2},
\end{align*}
using~\eqref{eq:radius-assumption}.
Plugging this into~\eqref{eq:rad-h-jensen} yields
\begin{equation}
  \widehat{\mathfrak{R}}_n(\mathcal{H})
  \;\le\;
  \frac{\Lambda}{n}\,R
  \sqrt{\sum_{i=1}^n \frac{1}{m_i^2}}.
  \label{eq:rad-h-final}
\end{equation}

Combining~\eqref{eq:exp-ramp-ub}, Lemma~\ref{lem:01-to-ramp}, and~\eqref{eq:rad-h-final}, we obtain that with probability at least $1-\delta$ over $S$,
\begin{equation}
  L(w)
  \;\le\;
  \frac{1}{n}\sum_{i=1}^n \Phi_{m_i}\bigl(g_i(w)\bigr)
  \;+\;
  \frac{2\Lambda R}{n}\,\sqrt{\sum_{i=1}^n \frac{1}{m_i^2}}
  \;+\;
  \sqrt{\frac{2\log(2/\delta)}{n}},
  \label{eq:ramp-bound}
\end{equation}
simultaneously for all $w$ with $\|w\|\le\Lambda$.
This is the ramp-loss generalization bound, analogous in structure to margin-distribution bounds for SVM-type classifiers \citep{shalev2014understanding,Bousquet2004IntroductionTS}.

\subsection{From ramp loss to adaptive-margin logistic loss}

We now show that the ramp loss is pointwise bounded by a constant multiple of the shifted logistic loss.

\begin{lemma}[Ramp vs.\ logistic]
\label{lem:ramp-vs-log}
For all $m>0$ and $u\in\mathbb{R}$,
\begin{equation}
  \Phi_m(u)
  \;\le\;
  \frac{1}{\log 2}\,
  \log\bigl(1 + e^{-(u - m)}\bigr).
  \label{eq:ramp-vs-log}
\end{equation}
\end{lemma}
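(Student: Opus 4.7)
The plan is to prove the inequality by splitting on the three branches of the piecewise definition of $\Phi_m$. Let $L(u) = \frac{1}{\log 2}\log(1+e^{-(u-m)})$ denote the scaled logistic loss on the right-hand side.

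First I would handle the trivial region $u \ge m$. Here $\Phi_m(u) = 0$, while $L(u) \ge 0$ always because $\log(1+e^{t}) \ge 0$ for all real $t$. So the inequality holds with slack.

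Next, for the region $u \le 0$, we have $\Phi_m(u) = 1$, so the goal is to show $L(u) \ge 1$, equivalently $\log(1+e^{m-u}) \ge \log 2$, equivalently $e^{m-u} \ge 1$. This last inequality is immediate from $m - u \ge m > 0$.

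Finally, in the interior region $0 < u < m$, we have $\Phi_m(u) = 1 - u/m \in (0,1)$. Since $m - u > 0$, the same computation as in the previous case gives $1 + e^{m-u} \ge 2$, hence $L(u) \ge 1 > 1 - u/m = \Phi_m(u)$.

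There is essentially no obstacle: the key algebraic observation is that the calibration constant $1/\log 2$ is chosen precisely so that $L(m) = 1$, and $L$ is decreasing in $u$, so $L(u) \ge 1$ on the entire range $u \le m$ where $\Phi_m$ can take positive values. The three cases together cover all $u \in \mathbb{R}$ and yield the claim.
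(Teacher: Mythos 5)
Your proof is correct and follows essentially the same route as the paper's: the identical three-way case split on the branches of $\Phi_m$, with the observation that $\log\bigl(1+e^{m-u}\bigr)\ge\log 2$ whenever $u<m$ doing all the work in the two nontrivial cases. Your closing remark that the constant $1/\log 2$ calibrates $L(m)=1$ and that $L$ is decreasing is a nice unifying way to see why the bound holds on all of $u\le m$ at once, but it is the same argument in substance.
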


\begin{proof}
Fix $m>0$ and consider three cases.

\medskip\noindent
\emph{Case 1: $u \ge m$.}
Then $\Phi_m(u) = 0$ by definition, while the logistic loss is nonnegative:
\[
  \log\bigl(1 + e^{-(u - m)}\bigr) \ge 0.
\]
Hence
\[
  \Phi_m(u) = 0
  \;\le\;
  \frac{1}{\log 2}\,
  \log\bigl(1 + e^{-(u - m)}\bigr).
\]

\medskip\noindent
\emph{Case 2: $0 < u < m$.}
Then $m-u > 0$, so
\[
  \log\bigl(1 + e^{-(u - m)}\bigr)
  =
  \log\bigl(1 + e^{m-u}\bigr)
  \ge
  \log(1+1) = \log 2.
\]
Therefore
\[
  \frac{1}{\log 2}\,
  \log\bigl(1 + e^{-(u - m)}\bigr)
  \;\ge\;
  \frac{1}{\log 2}\,\log 2 = 1.
\]
On the other hand, for $0<u<m$ we have
\[
  \Phi_m(u) = 1 - \frac{u}{m} < 1,
\]
so
\[
  \Phi_m(u)
  \;\le\;
  1
  \;\le\;
  \frac{1}{\log 2}\,
  \log\bigl(1 + e^{-(u - m)}\bigr).
\]

\medskip\noindent
\emph{Case 3: $u \le 0$.}
Then $u < m$ and
\[
  \log\bigl(1 + e^{-(u - m)}\bigr)
  =
  \log\bigl(1 + e^{m-u}\bigr)
  \ge
  \log(1+1) = \log 2.
\]
Thus
\[
  \frac{1}{\log 2}\,
  \log\bigl(1 + e^{-(u - m)}\bigr)
  \;\ge\; 1.
\]
But for $u \le 0$,
\[
  \Phi_m(u) = 1,
\]
so
\[
  \Phi_m(u)
  \;\le\;
  \frac{1}{\log 2}\,
  \log\bigl(1 + e^{-(u - m)}\bigr).
\]

In all three cases~\eqref{eq:ramp-vs-log} holds.
\end{proof}

Applying Lemma~\ref{lem:ramp-vs-log} to each training point $i$ with $u=g_i(w)$ and $m=m_i$ gives
\begin{equation}
  \Phi_{m_i}\bigl(g_i(w)\bigr)
  \;\le\;
  \frac{1}{\log 2}\,
  \log\bigl(1 + e^{-(g_i(w) - m_i)}\bigr)
  \;=\;
  \frac{1}{\log 2}\,\ell_i(w).
  \label{eq:ramp-vs-log-i}
\end{equation}
Averaging over $i=1,\dots,n$ yields
\begin{equation}
  \frac{1}{n}\sum_{i=1}^n \Phi_{m_i}\bigl(g_i(w)\bigr)
  \;\le\;
  \frac{1}{\log 2}\,\frac{1}{n}\sum_{i=1}^n \ell_i(w)
  \;=\;
  \frac{1}{\log 2}\,\hat{L}_{\log}(w).
  \label{eq:ramp-vs-log-emp}
\end{equation}

\subsection{Proof of Theorem~\ref{thm:main}}

Combining Lemma~\ref{lem:01-to-ramp} with the ramp bound~\eqref{eq:ramp-bound}, we already have that with probability at least $1-\delta$, for all $\|w\|\le\Lambda$,
\[
  L(w)
  \;\le\;
  \frac{1}{n}\sum_{i=1}^n \Phi_{m_i}\bigl(g_i(w)\bigr)
  \;+\;
  \frac{2\Lambda R}{n}\,\sqrt{\sum_{i=1}^n \frac{1}{m_i^2}}
  \;+\;
  \sqrt{\frac{2\log(2/\delta)}{n}}.
\]
Using~\eqref{eq:ramp-vs-log-emp}, we can upper bound the empirical ramp term by the empirical adaptive-margin logistic loss:
\[
  \frac{1}{n}\sum_{i=1}^n \Phi_{m_i}\bigl(g_i(w)\bigr)
  \;\le\;
  \frac{1}{\log 2}\,\hat{L}_{\log}(w).
\]
Thus
\[
  L(w)
  \;\le\;
  \frac{1}{\log 2}\,\hat{L}_{\log}(w)
  \;+\;
  \frac{2\Lambda R}{n}\,\sqrt{\sum_{i=1}^n \frac{1}{m_i^2}}
  \;+\;
  \sqrt{\frac{2\log(2/\delta)}{n}},
\]
which is precisely~\eqref{eq:main-bound}. This completes the proof of Theorem~\ref{thm:main}.
\qed

This analysis can be extended beyond linear reward functions to non-linear function approximators such as Neural Networks. The only change would be to replace $\Lambda$ with the analogous complexity measure for the class of Neural Networks.

\section{Discussion on the Discriminative–Generative Tradeoff}

\label{ap: discussion}
In this section, we provide theoretical justification for why DPO-PoP-Random appears more robust and generalizes better than DPO-PoP-Iter. We begin by presenting a generalization bound for adaptive-margin classifiers with a linear reward function. The full proof and additional details can be found in Appendix~\ref{ap:theorem}.

\begin{equation}
  \Pr_{(\Psi,M)\sim\mathcal{D}}\!\bigl( w^\top \Psi \le 0 \bigr)
  \;\le\;
  \frac{1}{\log 2}\,\hat{L}_{\log}(w)
  \;+\;
  \frac{2\Lambda R}{n}\,\sqrt{\sum_{i=1}^n \frac{1}{m_i^2}}
  \;+\;
  \sqrt{\frac{2\log(2/\delta)}{n}}.
  \label{eq:main-bound-in-paper}
\end{equation}

The first term is the empirical loss, and the second term corresponds to the Rademacher complexity of the adaptive-margin function class. To highlight the key intuition behind our empirical observations, define
\[
\widetilde{M} \;\coloneqq\; \sqrt{\sum_{i=1}^n \frac{1}{m_i^2}}\,.
\]


In DPO-PoP-Random, we randomly sample preference pairs and obtain a single annotation per sampled pair. This results in stronger preferences appearing more frequently than weaker ones in the dataset. In contrast, DPO-PoP-Iter ensures that each preference is equally represented by comparing it against $k$ weaker preferences, resulting in a larger proportion of weaker preferences in the dataset. Since weak preferences correspond to smaller $m_i$, they contribute more heavily to $\widetilde{M}$. Consequently, $\widetilde{M}_{\text{random}} \;<\; \widetilde{M}_{\text{iter}}$,
which leads to a tighter generalization bound for DPO-PoP-Random.

This theoretical prediction matches our empirical findings: DPO-PoP-Random outperforms DPO-PoP-Iter on RewardBench (Table~\ref{tab:dpo-rewardbench-llama3.2-3b}), AlpacaEval2 (Table~\ref{tab:dpo-alpaca-llama3.2-3b}), and in UltraRM win rates of aligned model responses against Vanilla DPO (Table~\ref{tab:dpo-gen-results-llama3.2-3b}). On the UltraFeedback test set, however, DPO-PoP-Iter achieves higher classification accuracy (Table~\ref{tab:dpo-margin-results-llama3.2-3b}). This is because the UltraFeedback test set is close to the training distribution, whereas RewardBench, aligned model generations, and AlpacaEval2 differ more substantially. As a result, the effect captured by the generalization bound (through $\widetilde{M}$) is more pronounced on RewardBench, aligned model generations, and AlpacaEval2 than on the UltraFeedback test set.

\section{Generation Case Study}
\label{ap:generation-case-study}
In this section, we provide a generation case study comparing all approaches with respect to their outputs on the UltraFeedback test set. We illustrate this for a set of two prompts, showcasing the generations produced by each method. From the results, we observe that DPO-PoP-Random appears to be slightly more coherent and informative than the other approaches.

\subsection{Case-1}
{\ttfamily
\parbox{\textwidth}{
Prompt: Question: Title: Extremely Frustrating! Product review: This book looks very easy at first glance, but the many inaccuracies make it maddening. I am in the midst of the first project (furry cardigan for a child) and so far it took 4 tries to get the right gauge (she says size 13, I ended up on a 10.5 even though on all other projects I've done my gauge is accurate), and she specifically states that the pattern done with Furz won't curl, which it definitely is doing. Aargh!!!! I'm ready to throw it out the window. Would you say this review depicts the product in a flattering or unflattering light?
Answer:

}}

\subsubsection{DPO-PoP-Random}

UltraRM reward of 5.0625.\\

{\ttfamily
\parbox{\textwidth}{
This review depicts the product in an unflattering light. The reviewer expresses extreme frustration with the inaccurate measurements and design elements within the book's instructions, leading them to struggle significantly during their current knitting project. They mention feeling as if they want to abandon the entire endeavor due to these issues. Overall, the tone conveys dissatisfaction and lack of confidence in utilizing the provided patterns.
}
}

\subsubsection{DPO-PoP-Iter}

UltraRM reward of -0.6016.\\

{\ttfamily
\parbox{\textwidth}{
Unflattering Light\\\\The review depicted the product in an unfavorable manner, expressing frustration due to numerous errors within the instructions. These inconsistencies affected not only one specific aspect of the garment construction but also raised doubts about its intended performance as stated earlier - i.e., resistance towards curling when using "Furz." As such, readers might be deterred from attempting similar tasks using this particular guidebook.

Confidence:95\%
}}

\subsubsection{DPO-Margin-gt}

UltraRM reward of -0.4023.\\

{\ttfamily
\parbox{\textwidth}{
Unflattering Light\\\\The provided review highlights several issues with the book's accuracy, making it frustrating for the reader while working on a project. The inconsistencies mentioned include inaccurate sizing recommendations, incorrect yarn behavior as predicted by the pattern, and previous reliability in gauge measurements. These problems lead the reviewer to feel frustrated during their crafting experience.

Confidence: 90\% }}

\subsubsection{DPO-Margin-gt-scaled}

UltraRM reward of -2.2813.\\

{\ttfamily
\parbox{\textwidth}{
This review depicts the product in an unflattering light.

Confidence: 95\%
}}

\subsubsection{DPO-Margin-1}

UltraRM reward of -0.1367.\\

{\ttfamily
\parbox{\textwidth}{
Unflattering Light\\\\This review depictsthe product in an unfavorable manner as the customer expresses extreme frustration due to numerous inaccuracies within the instructions. The confusion regarding gauge measurements and unexpected curling behavior have led the user to feel exasperated about working through the project.

Confidence: 95\%
}}

\subsubsection{Vanilla-DPO}

UltraRM reward of -2.7969.\\

{\ttfamily
\parbox{\textwidth}{
Unflattering Light
Confidence: 85\%

The reviewer\'s experience with the product has been negative due to numerous inaccuracies in the instructions. Despite following the guidelines provided by the author, they were unable to achieve the desired results consistently across different sizes. The inconsistent measurements indicated by various body sizes also demonstrate an unreliable guide for users.

In addition, as mentioned during their knitting process, some aspects suggested not to curl while others did curl upon use of the garment material "Fur." As such, overall accuracy plays a significant role when considering using the given work on its intended purpose. In conclusion, since issues persisted despite proper execution according to guidance from authors\' recommendations, evaluating this content under unfavorable conditions may best capture user experiences present within these scenarios.
}}

\subsection{Case-2}
{\ttfamily
\parbox{\textwidth}{
Prompt: What are some popular street food vendors in Austin, Texas?
Answer:
}}

\subsubsection{DPO-PoP-Random}

UltraRM reward of 5.6875.\\

{\ttfamily
\parbox{\textwidth}{
Austin is known for its vibrant and diverse culinary scene, with numerous delicious street food options. Here's a list of popular street food vendors to explore:

1. Torchy's Tacos - Known for their creative tacos featuring unique flavors such as the Trailer Park (with green chile, cheese, bacon, sour cream) and the Taco Loco (chipotle ranch avocado crema).

2. Veracruz All Natural - This family-owned brand offers authentic Mexican dishes like migas (scrambled tortilla strips), enchiladas verdes, and homemade tamales.

3. La Condesa - A colorful cart specializing in seafood-focused eats including shrimp po' boys, fish tortas, ceviche burritos, and more. 

4. Franklin Barbecue (Food Truck Operations) – Although primarily famous for their brick-and-mortar location at East Cesar Chavez St., Franklin Food Trucks occasionally pop up around Austin offering mouth-watering barbecue platters, ribs, brisket sandwiches, etc.

5. Holy Roller Fried Pickle Cart - Don't miss out on these crispy fried pickled dill slices that make perfect snacks for any occasion!

6. Chi'Lantro - Savor Korean-inspired fusion cuisine by sampling items ranging from bulgogi beef bowls or kimchi quesadillas to spicy rice cakes topped with cilantro slaw.

7. The Frutería - Stocked with fresh fruits and natural juices, this vendor serves both locals and tourists fruit-based delights, various smoothies \& agua frescas.

8. Home Slice Pizza - Famous for their New York-style pizza cut into generous square portions while serving flavorful toppings coupled with fantastic service.

9.Lone Star Shrimp Boil - Catchy name aside, taste buds will surely thank you too because they exclusively offer handcrafted Cajun boil packs loaded with succulent Louisiana prawns alongside corn and smoked sausage.

10.Swine Haus Gourmet Bacon - An artisanal haven dedicated entirely towards providing delectable handmade gourmet bacon varieties paired effectively across your favorite breakfast classics.

Remember that while many establishments may have fixed locations, it’s always wise to check social media, local event listings or call ahead before visiting since schedules can change seasonally or during unforeseen circumstances. Happy eating!
}}

\subsubsection{DPO-PoP-Iter}

UltraRM reward of 1.3828.\\

{\ttfamily
\parbox{\textwidth}{
Austin, Texas is known for its vibrant and diverse culinary scene. Here are a few popular street food vendors you might want to check out:

1. Veracruz All Natural - This vendor specializes in Mexican cuisine, famous for their migas tacos made with crispy tortilla strips filled with shredded chicken or cheese.

2. Torchy\'s Tacos - A well-known taco chain originating from this city offering innovative flavors like the "Trailer Park" (fry sauce, queso fresco, pico de gallo) and "Shack Shack," topped with bacon and grilled onions.

3. Tacodeli - Offering Tex-Mex-style breakfast items such as breakfast burritos stuffed with scrambled eggs, salsa, avocado, jalapeños, and choice of meat – carne guisada, sausage, or chorizo. 

4. Home Slice Pizza - Known for their New York-style pizza served by the slice. It comes in various toppings including classic pepperoni but also more unique options such as cathead (a combination of prosciutto and arugula).

5. G\'Raj Mahal - Primarily an Indian restaurant that serves delicious Punjabi and North Indian dishes through a mobile ordering system designed specifically for their carts.

6. La Condesa - Specializing in authentic Oaxacan cuisine featuring regional favorites like tlayudas (crunchy fried grasshopper-like flatbread), mole negro, chiles rellenos, elote, and other local specialties.

7. Torchy’s Trailer Park – Another outpost of Torchy\'s fame; they serve creative tacos paired with homemade pickles among other fun sides.

8. Chi\'Lantro Mediterranean Grill - Serves up fresh salads, wraps, falafel bowls along with Korean BBQ fusion items like bulgogi beef on rice, making it perfect for adventurous eaters.

These are just a taste of what Austin has to offer when it comes to street food culture. Enjoy exploring these vendors during your visit!
}}

\subsubsection{DPO-margin-gt}

UltraRM reward of -0.0898.\\

{\ttfamily
\parbox{\textwidth}{
As an AI assistant, I'm happy to help you with your question! Austin, Texas is known for its vibrant and diverse street food scene. Here are some popular street food vendors that you might want to check out:
1. Taco Truck Village - This iconic location is home to over a dozen taco trucks serving up delicious and authentic Tex-Mex cuisine.
2. Franklin Barbecue - Although not strictly a vendor, this world-renowned barbecue joint offers takeout and is often cited as one of the best BBQ places in the country. Be prepared for long lines!
3. Torchy's Tacos - A local favorite, Torchy's serves creative tacos filled with everything from queso to grilled fish.
4. Veracruz All Natural - Another highly-acclaimed taqueria, Veracruz specializes in traditional Mexican dishes like carne guisada and chiles rellenos.
5. Amy's Ice Creams - For something sweet, head to Amy's for unique ice cream flavors made fresh daily using locally sourced ingredients.

Remember to always follow safety guidelines when enjoying street food, such as making sure proper hand-washing facilities are available before consuming any foods served by vendor stands or carts. Happy eating!

Do let me know if there's anything else specific you'd like information on, be it general recommendations about visiting culinary destinations across different U.S cities or suggestions related to creating personalized recipe combinations; feel free to ask my questions anytime at all.
}}

\subsubsection{DPO-margin-gt-scaled}

UltraRM reward of -2.1563.\\

{\ttfamily
\parbox{\textwidth}{
As a helpful and respectful assistant, I'd be happy to provide information on popular street food vendors in Austin, Texas! Here are some well-known options:

1. Torchy's Tacos - Thispopular taco joint offers creative tacos with unique flavor combinations.
2. Franklin Barbecue - Although not strictly a street vendor, this famous barbecue spot has multiple locations throughout the city.
3. La Condesa - A trendy Mexican restaurant that also serves delicious taquitos and other street-style eats.
4. Veracruz All Natural - Specializing in traditional Oaxacan cuisine like mole poblano and homemade tortillas.
5. Home Slice Pizza - Known for their mouthwatering New York-style pizza by the slice or whole pie.

Please note that popularity can vary over time and based on personal preferences. It is always best to check current reviews before visiting any of these establishments as they may have changed since my knowledge cutoff date in September 2021. Enjoy your culinary adventures exploring Austin's vibrant street food scene!

Confidence: 95\%
}}

\subsubsection{DPO-margin-1}

UltraRM reward of 1.25.\\

{\ttfamily
\parbox{\textwidth}{
Austin is known for its vibrant and diverse culinary scene. Here\'s a list of some well-known and beloved street food vendors across the city:

1. Torchy\'s Tacos - This local favorite specializes in creative tacos with unique flavor combinations like Trailer Park (fried green tomato, bacon, remoulade) and The Atomic Nacho Tower.

2. Veracruz All Natural - For authentic Mexican cuisine, head to this vendor serving up delicious migas dishes, as well as other traditional favorites such as tortas and enchiladas.

3. La Condesa taqueria - A sister location of Taco Joint on South Congress Avenue, offering handmade-made flour or corn tortillas filled with meats, seafood options and vegetarian delights.

4. Chi\'Lantro - Offering Asian-inspired fusion eats, from bulgogi beef wraps to Korean-style fried rice bowls.
5. TacoJoint - Run by owner Robb White, owners serve generous portions packed full of savory meat choices along side fresh homemade chips made daily!

6. Chuy\'s - Originating out of Austin, they offer eclectic Tex-Mex flavors featuring their signature "Big As Yo\' Face" burrito – you\'ll find plenty more quirky offerings there too! 

7. Mela Indian Cuisine - Serves flavorful North Indian delicacies that transport your taste buds through streets of Old Delhi onto Rainey Street.

8. Frank Lloyd Wright Barbecue \& Catering Co - Specializes in slow-smoked brisket served at various venues around town including some pop-up events where it can be found parked nearby Capitol Dome during festivals etc..

These are just a few examples; however, several new eateries emerge each year so staying updated would also provide additional delightful finds throughout the bustling culinary landscape of Austin.

Remember, always verify hours, locations and current status before visiting any establishment due to potential seasonal changes, temporary closures or venue shifts.
}}

\subsubsection{Vanilla-DPO}

UltraRM reward of 0.4414.\\

{\ttfamily
\parbox{\textwidth}{
There are many delicious and popular street food vendors in Austin, Texas. Here are a few recommendations:

1. Torchy\'s Tacos - Known for their creative tacos filled with unique ingredients like queso fresco, pickled carrots, onions, house-made hot sauce, cilantro lime crema, and more.

2. Veracruz All Natural - Specializing in migas (mashed tortilla chips) topped with scrambled eggs, diced chicken cooked in its own lard, onions, tomatoes, jalapeños, refried beans, sour cream or Mexican crema \& shredded cheese; as well as their famous shrimp dishes.

3. Tacodeli - This local favorite offers breakfast favorites such as avocado toast, burritos, quesadillas, and egg sandwiches along with other tasty options made from scratch daily using fresh locally sourced produce when possible!

4.Money Lion Vegan Food Truck - A family-owned vegan truck offering flavorful plant-based versions of classic Tex-Mex dishes including "Lion Burgers," fries covered in chili con carne, enchiladas, grilled chicken salads, and so much more!

5.Good Seed Donuts - With various flavors to choose from on multiple days each week at the East Side Public Market they specialize in handcrafted donutes made primarily with wholesome ingredients and all sorts of delightful toppings available too! 

Remember that this is just an overview – you may want to check out additional resources online, visit websites, social media pages, call them up directly or visit your personal travel guides to learn about even more mouth-watering treats waiting to be discovered near you during your stay in vibrant city of Austin!
}}

\section{LLM Prompts for PoP Annotation}
\label{ap:pop-prompt}

You are an expert evaluator of preference pairs. Your task is to determine which of two preferences is stronger.

A preference includes:
- A prompt
- A chosen response (better response)
- A rejected response (worse response)

A preference is considered STRONGER if there is a large discrepancy between the chosen and rejected responses — meaning the chosen response is clearly better.

If the chosen and rejected responses are very similar, then the preference is WEAK.

Example:

Preference A:

Prompt: Write a short poem about dogs?

Chosen: Soft eyes gleam in morning light,\\
    Paws that dance with pure delight.\\
    Loyal hearts that never tire,\\
    Warmth beside the evening fire.\\

    Through every storm, through every song,\\
    Their love is steady, fierce, and strong.\\
    No richer gift the world could send—\\
    Than one true dog, a faithful friend.\\
    
Rejected: Silent paws on midnight floors,\\
    Whiskers brush through shadowed doors.\\
    A flick of tail, a regal stare—\\
    Mystery hums upon the air.\\

    Sunlit naps and moonlit schemes,\\
    Guardians of our secret dreams.\\
    Independent, soft, and sly—\\
    Little moons with tiger eyes.\\

Preference B:

Prompt: Which of dog or cat has two hands?

Chosen: Neither dogs nor cats have hands, they both have four paws.

Rejected: This is an invalid question.

Analysis: Preference A is the stronger preference. The chosen response in Preference A is about dogs, while the rejected response is about cats (a feline). This is a clear and large discrepancy. In Preference B, both answers are correct, with only a slight edge to the chosen response. Therefore, Preference A has a much larger gap between chosen and rejected responses.

OUTPUT FORMAT:\\

The first line must ONLY contain: A, B, or C
\begin{itemize}
    \item A if Preference A is stronger
    \item B if Preference B is stronger
    \item C if you cannot determine which is stronger or if there is a tie
\end{itemize}

Second line: Provide a short, concise explanation for your choice.

IMPORTANT: Avoid position bias. Do not let the order of presentation or length of responses influence your evaluation. Be objective.

Evaluate the following two preferences and determine which one is stronger.

Preference A:
Prompt: \{PROMPT\_A\}
Chosen: \{CHOSEN\_RESPONSE\_A\}
Rejected: \{REJECTED\_RESPONSE\_A\}

Preference B:
Prompt: \{PROMPT\_B\}
Chosen: \{CHOSEN\_RESPONSE\_B\}
Rejected: \{REJECTED\_RESPONSE\_B\}

Which preference is stronger? Remember: First line should be A, B, or C only.

\appendix




\newpage
\section*{NeurIPS Paper Checklist}

\begin{enumerate}

\item {\bf Claims}
    \item[] Question: Do the main claims made in the abstract and introduction accurately reflect the paper's contributions and scope?
    \item[] Answer: \answerYes{} 
    \item[] Justification: The paper remains faithful to the claims made in abstract and introduction. These claims are supported by the provided results in Section ~\ref{sec: results} and ablations in the appendix.
    \item[] Guidelines:
    \begin{itemize}
        \item The answer \answerNA{} means that the abstract and introduction do not include the claims made in the paper.
        \item The abstract and/or introduction should clearly state the claims made, including the contributions made in the paper and important assumptions and limitations. A \answerNo{} or \answerNA{} answer to this question will not be perceived well by the reviewers. 
        \item The claims made should match theoretical and experimental results, and reflect how much the results can be expected to generalize to other settings. 
        \item It is fine to include aspirational goals as motivation as long as it is clear that these goals are not attained by the paper. 
    \end{itemize}

\item {\bf Limitations}
    \item[] Question: Does the paper discuss the limitations of the work performed by the authors?
    \item[] Answer: \answerYes{} 
    \item[] Justification: The paper proposes gathering preference over preference annotations to address issues with noisy and improperly calibrated rating scale annotations. The approach is general and the paper instantiates a DPO variant that uses PoP labels to perform margin aware alignment. Limitations are discussed in Appendix~\ref{ap:impact}.
    \item[] Guidelines:
    \begin{itemize}
        \item The answer \answerNA{} means that the paper has no limitation while the answer \answerNo{} means that the paper has limitations, but those are not discussed in the paper. 
        \item The authors are encouraged to create a separate ``Limitations'' section in their paper.
        \item The paper should point out any strong assumptions and how robust the results are to violations of these assumptions (e.g., independence assumptions, noiseless settings, model well-specification, asymptotic approximations only holding locally). The authors should reflect on how these assumptions might be violated in practice and what the implications would be.
        \item The authors should reflect on the scope of the claims made, e.g., if the approach was only tested on a few datasets or with a few runs. In general, empirical results often depend on implicit assumptions, which should be articulated.
        \item The authors should reflect on the factors that influence the performance of the approach. For example, a facial recognition algorithm may perform poorly when image resolution is low or images are taken in low lighting. Or a speech-to-text system might not be used reliably to provide closed captions for online lectures because it fails to handle technical jargon.
        \item The authors should discuss the computational efficiency of the proposed algorithms and how they scale with dataset size.
        \item If applicable, the authors should discuss possible limitations of their approach to address problems of privacy and fairness.
        \item While the authors might fear that complete honesty about limitations might be used by reviewers as grounds for rejection, a worse outcome might be that reviewers discover limitations that aren't acknowledged in the paper. The authors should use their best judgment and recognize that individual actions in favor of transparency play an important role in developing norms that preserve the integrity of the community. Reviewers will be specifically instructed to not penalize honesty concerning limitations.
    \end{itemize}

\item {\bf Theory assumptions and proofs}
    \item[] Question: For each theoretical result, does the paper provide the full set of assumptions and a complete (and correct) proof?
    \item[] Answer: \answerYes{} 
    \item[] Justification: We state all the assumptions for the theoretical result regarding the generalization error bound of adaptive-margin alignment algorithms and provide the proof in Appendix \ref{ap:theorem}.
    \item[] Guidelines:
    \begin{itemize}
        \item The answer \answerNA{} means that the paper does not include theoretical results. 
        \item All the theorems, formulas, and proofs in the paper should be numbered and cross-referenced.
        \item All assumptions should be clearly stated or referenced in the statement of any theorems.
        \item The proofs can either appear in the main paper or the supplemental material, but if they appear in the supplemental material, the authors are encouraged to provide a short proof sketch to provide intuition. 
        \item Inversely, any informal proof provided in the core of the paper should be complemented by formal proofs provided in appendix or supplemental material.
        \item Theorems and Lemmas that the proof relies upon should be properly referenced. 
    \end{itemize}

    \item {\bf Experimental result reproducibility}
    \item[] Question: Does the paper fully disclose all the information needed to reproduce the main experimental results of the paper to the extent that it affects the main claims and/or conclusions of the paper (regardless of whether the code and data are provided or not)?
    \item[] Answer: \answerYes{} 
    \item[] Justification: We provide details about our experimental setup in Section~\ref{sec: results} and list out the hyperparameters and compute details in Appendix~\ref{ap:exp_details}.
    \item[] Guidelines:
    \begin{itemize}
        \item The answer \answerNA{} means that the paper does not include experiments.
        \item If the paper includes experiments, a \answerNo{} answer to this question will not be perceived well by the reviewers: Making the paper reproducible is important, regardless of whether the code and data are provided or not.
        \item If the contribution is a dataset and\slash or model, the authors should describe the steps taken to make their results reproducible or verifiable. 
        \item Depending on the contribution, reproducibility can be accomplished in various ways. For example, if the contribution is a novel architecture, describing the architecture fully might suffice, or if the contribution is a specific model and empirical evaluation, it may be necessary to either make it possible for others to replicate the model with the same dataset, or provide access to the model. In general. releasing code and data is often one good way to accomplish this, but reproducibility can also be provided via detailed instructions for how to replicate the results, access to a hosted model (e.g., in the case of a large language model), releasing of a model checkpoint, or other means that are appropriate to the research performed.
        \item While NeurIPS does not require releasing code, the conference does require all submissions to provide some reasonable avenue for reproducibility, which may depend on the nature of the contribution. For example
        \begin{enumerate}
            \item If the contribution is primarily a new algorithm, the paper should make it clear how to reproduce that algorithm.
            \item If the contribution is primarily a new model architecture, the paper should describe the architecture clearly and fully.
            \item If the contribution is a new model (e.g., a large language model), then there should either be a way to access this model for reproducing the results or a way to reproduce the model (e.g., with an open-source dataset or instructions for how to construct the dataset).
            \item We recognize that reproducibility may be tricky in some cases, in which case authors are welcome to describe the particular way they provide for reproducibility. In the case of closed-source models, it may be that access to the model is limited in some way (e.g., to registered users), but it should be possible for other researchers to have some path to reproducing or verifying the results.
        \end{enumerate}
    \end{itemize}

\item {\bf Open access to data and code}
    \item[] Question: Does the paper provide open access to the data and code, with sufficient instructions to faithfully reproduce the main experimental results, as described in supplemental material?
    \item[] Answer: \answerNo{} 
    \item[] Justification: We will release the code upon acceptance. However, we provided details about our experimental setup in Section~\ref{sec: results} and list out the hyperparameters and compute details in Appendix~\ref{ap:exp_details}, to ensure reproducibility of our results. We used the publicly available version of the Ultrafeedback binarized dataset on huggingface, that we have referenced in the paper.
    \item[] Guidelines:
    \begin{itemize}
        \item The answer \answerNA{} means that paper does not include experiments requiring code.
        \item Please see the NeurIPS code and data submission guidelines (\url{https://neurips.cc/public/guides/CodeSubmissionPolicy}) for more details.
        \item While we encourage the release of code and data, we understand that this might not be possible, so \answerNo{} is an acceptable answer. Papers cannot be rejected simply for not including code, unless this is central to the contribution (e.g., for a new open-source benchmark).
        \item The instructions should contain the exact command and environment needed to run to reproduce the results. See the NeurIPS code and data submission guidelines (\url{https://neurips.cc/public/guides/CodeSubmissionPolicy}) for more details.
        \item The authors should provide instructions on data access and preparation, including how to access the raw data, preprocessed data, intermediate data, and generated data, etc.
        \item The authors should provide scripts to reproduce all experimental results for the new proposed method and baselines. If only a subset of experiments are reproducible, they should state which ones are omitted from the script and why.
        \item At submission time, to preserve anonymity, the authors should release anonymized versions (if applicable).
        \item Providing as much information as possible in supplemental material (appended to the paper) is recommended, but including URLs to data and code is permitted.
    \end{itemize}

\item {\bf Experimental setting/details}
    \item[] Question: Does the paper specify all the training and test details (e.g., data splits, hyperparameters, how they were chosen, type of optimizer) necessary to understand the results?
    \item[] Answer: \answerYes{} 
    \item[] Justification: We provided all details about our experimental setup in Section~\ref{sec: results} and list out the hyperparameters and compute details in Appendix~\ref{ap:exp_details}.
    \item[] Guidelines:
    \begin{itemize}
        \item The answer \answerNA{} means that the paper does not include experiments.
        \item The experimental setting should be presented in the core of the paper to a level of detail that is necessary to appreciate the results and make sense of them.
        \item The full details can be provided either with the code, in appendix, or as supplemental material.
    \end{itemize}

\item {\bf Experiment statistical significance}
    \item[] Question: Does the paper report error bars suitably and correctly defined or other appropriate information about the statistical significance of the experiments?
    \item[] Answer: \answerYes{} 
    \item[] Justification: We report mean and standard deviation across 6 independent runs, with varying seeds, for all our experiments in Section~\ref{sec: results}.
    \item[] Guidelines:
    \begin{itemize}
        \item The answer \answerNA{} means that the paper does not include experiments.
        \item The authors should answer \answerYes{} if the results are accompanied by error bars, confidence intervals, or statistical significance tests, at least for the experiments that support the main claims of the paper.
        \item The factors of variability that the error bars are capturing should be clearly stated (for example, train/test split, initialization, random drawing of some parameter, or overall run with given experimental conditions).
        \item The method for calculating the error bars should be explained (closed form formula, call to a library function, bootstrap, etc.)
        \item The assumptions made should be given (e.g., Normally distributed errors).
        \item It should be clear whether the error bar is the standard deviation or the standard error of the mean.
        \item It is OK to report 1-sigma error bars, but one should state it. The authors should preferably report a 2-sigma error bar than state that they have a 96\% CI, if the hypothesis of Normality of errors is not verified.
        \item For asymmetric distributions, the authors should be careful not to show in tables or figures symmetric error bars that would yield results that are out of range (e.g., negative error rates).
        \item If error bars are reported in tables or plots, the authors should explain in the text how they were calculated and reference the corresponding figures or tables in the text.
    \end{itemize}

\item {\bf Experiments compute resources}
    \item[] Question: For each experiment, does the paper provide sufficient information on the computer resources (type of compute workers, memory, time of execution) needed to reproduce the experiments?
    \item[] Answer: \answerYes{} 
    \item[] Justification: We provide details about the compute resources used for our experiments in Appendix~\ref{ap:exp_details}.
    \item[] Guidelines:
    \begin{itemize}
        \item The answer \answerNA{} means that the paper does not include experiments.
        \item The paper should indicate the type of compute workers CPU or GPU, internal cluster, or cloud provider, including relevant memory and storage.
        \item The paper should provide the amount of compute required for each of the individual experimental runs as well as estimate the total compute. 
        \item The paper should disclose whether the full research project required more compute than the experiments reported in the paper (e.g., preliminary or failed experiments that didn't make it into the paper). 
    \end{itemize}
    
\item {\bf Code of ethics}
    \item[] Question: Does the research conducted in the paper conform, in every respect, with the NeurIPS Code of Ethics \url{https://neurips.cc/public/EthicsGuidelines}?
    \item[] Answer: \answerYes{} 
    \item[] Justification: We abide by the NeurIPS Code of Ethics and have taken care to ensure that our research conforms to it in every respect.
    \item[] Guidelines:
    \begin{itemize}
        \item The answer \answerNA{} means that the authors have not reviewed the NeurIPS Code of Ethics.
        \item If the authors answer \answerNo, they should explain the special circumstances that require a deviation from the Code of Ethics.
        \item The authors should make sure to preserve anonymity (e.g., if there is a special consideration due to laws or regulations in their jurisdiction).
    \end{itemize}

\item {\bf Broader impacts}
    \item[] Question: Does the paper discuss both potential positive societal impacts and negative societal impacts of the work performed?
    \item[] Answer: \answerYes{} 
    \item[] Justification: We discuss both positive and negative societal impacts in Appendix~\ref{ap:impact}, covering improved LLM alignment as a positive impact and potential misuse for generating persuasive misinformation as a negative impact, along with context on why the latter is not specific to our contribution.
    \item[] Guidelines:
    \begin{itemize}
        \item The answer \answerNA{} means that there is no societal impact of the work performed.
        \item If the authors answer \answerNA{} or \answerNo, they should explain why their work has no societal impact or why the paper does not address societal impact.
        \item Examples of negative societal impacts include potential malicious or unintended uses (e.g., disinformation, generating fake profiles, surveillance), fairness considerations (e.g., deployment of technologies that could make decisions that unfairly impact specific groups), privacy considerations, and security considerations.
        \item The conference expects that many papers will be foundational research and not tied to particular applications, let alone deployments. However, if there is a direct path to any negative applications, the authors should point it out. For example, it is legitimate to point out that an improvement in the quality of generative models could be used to generate Deepfakes for disinformation. On the other hand, it is not needed to point out that a generic algorithm for optimizing neural networks could enable people to train models that generate Deepfakes faster.
        \item The authors should consider possible harms that could arise when the technology is being used as intended and functioning correctly, harms that could arise when the technology is being used as intended but gives incorrect results, and harms following from (intentional or unintentional) misuse of the technology.
        \item If there are negative societal impacts, the authors could also discuss possible mitigation strategies (e.g., gated release of models, providing defenses in addition to attacks, mechanisms for monitoring misuse, mechanisms to monitor how a system learns from feedback over time, improving the efficiency and accessibility of ML).
    \end{itemize}
    
\item {\bf Safeguards}
    \item[] Question: Does the paper describe safeguards that have been put in place for responsible release of data or models that have a high risk for misuse (e.g., pre-trained language models, image generators, or scraped datasets)?
    \item[] Answer: \answerNA{} 
    \item[] Justification: We align publicly available models on publicly available datasets, through our proposed DPO-PoP method. Our models do not contribute any additional risk over the already available base models.
    \item[] Guidelines:
    \begin{itemize}
        \item The answer \answerNA{} means that the paper poses no such risks.
        \item Released models that have a high risk for misuse or dual-use should be released with necessary safeguards to allow for controlled use of the model, for example by requiring that users adhere to usage guidelines or restrictions to access the model or implementing safety filters. 
        \item Datasets that have been scraped from the Internet could pose safety risks. The authors should describe how they avoided releasing unsafe images.
        \item We recognize that providing effective safeguards is challenging, and many papers do not require this, but we encourage authors to take this into account and make a best faith effort.
    \end{itemize}

\item {\bf Licenses for existing assets}
    \item[] Question: Are the creators or original owners of assets (e.g., code, data, models), used in the paper, properly credited and are the license and terms of use explicitly mentioned and properly respected?
    \item[] Answer: \answerYes{} 
    \item[] Justification: We use Llama-3.2-3B (\url{https://huggingface.co/meta-llama/Llama-3.2-3B-Instruct}) and Llama-3.1-8B (\url{https://huggingface.co/meta-llama/Llama-3.1-8B-Instruct}) \citep{grattafiori2024llama3herdmodels}, both released under the Meta Llama Community License, which permits academic research use. The UltraFeedback binarized dataset (\url{https://huggingface.co/datasets/HuggingFaceH4/ultrafeedback_binarized}) is released under the MIT License. Our codebase is built on the Safe-RLHF repository (\url{https://github.com/pku-alignment/safe-rlhf}), released under the Apache 2.0 License. All assets are properly cited in the paper.
    \item[] Guidelines:
    \begin{itemize}
        \item The answer \answerNA{} means that the paper does not use existing assets.
        \item The authors should cite the original paper that produced the code package or dataset.
        \item The authors should state which version of the asset is used and, if possible, include a URL.
        \item The name of the license (e.g., CC-BY 4.0) should be included for each asset.
        \item For scraped data from a particular source (e.g., website), the copyright and terms of service of that source should be provided.
        \item If assets are released, the license, copyright information, and terms of use in the package should be provided. For popular datasets, \url{paperswithcode.com/datasets} has curated licenses for some datasets. Their licensing guide can help determine the license of a dataset.
        \item For existing datasets that are re-packaged, both the original license and the license of the derived asset (if it has changed) should be provided.
        \item If this information is not available online, the authors are encouraged to reach out to the asset's creators.
    \end{itemize}

\item {\bf New assets}
    \item[] Question: Are new assets introduced in the paper well documented and is the documentation provided alongside the assets?
    \item[] Answer: \answerNA{} 
    \item[] Justification: We use publicly available models and datasets to train our models. We provide details about our experimental setup in Section~\ref{sec: results} and list out the hyperparameters and compute details in Appendix~\ref{ap:exp_details}. We will release the code upon acceptance, but we do not release any new assets.
    \item[] Guidelines:
    \begin{itemize}
        \item The answer \answerNA{} means that the paper does not release new assets.
        \item Researchers should communicate the details of the dataset\slash code\slash model as part of their submissions via structured templates. This includes details about training, license, limitations, etc. 
        \item The paper should discuss whether and how consent was obtained from people whose asset is used.
        \item At submission time, remember to anonymize your assets (if applicable). You can either create an anonymized URL or include an anonymized zip file.
    \end{itemize}

\item {\bf Crowdsourcing and research with human subjects}
    \item[] Question: For crowdsourcing experiments and research with human subjects, does the paper include the full text of instructions given to participants and screenshots, if applicable, as well as details about compensation (if any)? 
    \item[] Answer: \answerNA{} 
    \item[] Justification: We do not crowd source or research with human subjects in our work.
    \item[] Guidelines:
    \begin{itemize}
        \item The answer \answerNA{} means that the paper does not involve crowdsourcing nor research with human subjects.
        \item Including this information in the supplemental material is fine, but if the main contribution of the paper involves human subjects, then as much detail as possible should be included in the main paper. 
        \item According to the NeurIPS Code of Ethics, workers involved in data collection, curation, or other labor should be paid at least the minimum wage in the country of the data collector. 
    \end{itemize}

\item {\bf Institutional review board (IRB) approvals or equivalent for research with human subjects}
    \item[] Question: Does the paper describe potential risks incurred by study participants, whether such risks were disclosed to the subjects, and whether Institutional Review Board (IRB) approvals (or an equivalent approval/review based on the requirements of your country or institution) were obtained?
    \item[] Answer: \answerNA{} 
    \item[] Justification: We do not research with human subjects in our work.
    \item[] Guidelines:
    \begin{itemize}
        \item The answer \answerNA{} means that the paper does not involve crowdsourcing nor research with human subjects.
        \item Depending on the country in which research is conducted, IRB approval (or equivalent) may be required for any human subjects research. If you obtained IRB approval, you should clearly state this in the paper. 
        \item We recognize that the procedures for this may vary significantly between institutions and locations, and we expect authors to adhere to the NeurIPS Code of Ethics and the guidelines for their institution. 
        \item For initial submissions, do not include any information that would break anonymity (if applicable), such as the institution conducting the review.
    \end{itemize}

\item {\bf Declaration of LLM usage}
    \item[] Question: Does the paper describe the usage of LLMs if it is an important, original, or non-standard component of the core methods in this research? Note that if the LLM is used only for writing, editing, or formatting purposes and does \emph{not} impact the core methodology, scientific rigor, or originality of the research, declaration is not required.
    \item[] Answer: \answerYes{} 
    \item[] Justification: LLMs are used as annotators to gather preference-over-preference (PoP) labels in our LLM-annotated experiments (Section~\ref{sec: results}), which is a core experimental component of the paper. The system prompt used is provided in Appendix~\ref{ap:pop-prompt}. LLMs were also used for writing assistance and to verify the theoretical proof. A detailed LLM Usage Statement is provided in Appendix~\ref{ap:llm-usage}.
    \item[] Guidelines:
    \begin{itemize}
        \item The answer \answerNA{} means that the core method development in this research does not involve LLMs as any important, original, or non-standard components.
        \item Please refer to our LLM policy in the NeurIPS handbook for what should or should not be described.
    \end{itemize}

\end{enumerate}

\end{document}